\newcommand{\eref}[1]{Eq.~(\ref{#1})} % Equation
\newcommand{\sref}[1]{Sec.~\ref{#1}} % Section
\newcommand{\figref}[1]{Fig.~\ref{#1}} % Figure
\newcommand{\shushman}[1]{{\xxnote{ShC}{blue}{#1}}}
\newcommand{\oren}[1]{{\xxnote{OS}{magenta}{#1}}}
\newcommand{\xxnote}[3]{}
 \renewcommand{\xxnote}[3]{\color{#2}{#1: #3}}
\newtheorem{theorem}{Theorem}
\newtheorem*{theorem-non}{Theorem}
\newtheorem*{lemm-non}{Lemma}
\title{\vspace{0.20in}\LARGE \bf
Densification Strategies for Anytime Motion Planning 
over Large Dense Roadmaps 
}
\author{Shushman Choudhury, 
		Oren Salzman,
		Sanjiban Choudhury and 
		Siddhartha S.~Srinivasa$^{1}$% <-this % stops a space
\thanks{*Work by 
Sh. C., O. S. and S. S. was (partially) funded by the National Science Foundation IIS (\#1409003), Toyota Motor Engineering~\& Manufacturing (TEMA), and the Office of Naval Research.}% <-this % stops a space
\thanks{$^{1}$ The Robotics Institute, Carnegie Mellon University
%		shushmac@andrew.cmu.edu
%		sanjibac@andrew.cmu.edu
%		osalzman@andrew.cmu.edu
%		ss5@andrew.cmu.edu
		{\tt\small 
			$\{$shushmac, osalzman, sanjibac, ss5$\}$ @andrew.cmu.edu
		}}%
}
\newcommand{\calG}{\ensuremath{\mathcal{G}}\xspace}
\newcommand{\calM}{\ensuremath{\mathcal{M}}\xspace}
\newcommand{\calX}{\ensuremath{\mathcal{X}}\xspace}
\newcommand{\calQ}{\ensuremath{\mathcal{Q}}\xspace}
\newcommand{\calS}{\ensuremath{\mathcal{S}}\xspace}
\newcommand{\R}{\mathbb{R}}
\newcommand{\Cfree}{\ensuremath{\calX_{\rm free}}\xspace}
\newcommand{\Cobs}{\ensuremath{\calX_{\rm obs}}\xspace}
\newcommand{\Cs}{C-space\xspace}
\newcommand{\ignore}[1]{}
\newcommand{\cbest}[1]{c_{\mathrm{best}}^{#1}}
\newcommand{\cmin}[0]{c_{\mathrm{min}}}
\newcommand{\eps}[1]{\varepsilon_{#1}}
\newcommand{\imax}[0]{i_{\mathrm{max}}}
\newcommand{\vol}[1]{\mu\left(#1\right)}
\newcommand{\const}[0]{\Gamma}
\newcommand{\spacebest}[0]{\mathcal{X}_{\cbest{i}}}
\definecolor{myblue}{RGB}{158,202,225}
\definecolor{myred}{RGB}{252,146,114}
\definecolor{mygreen}{RGB}{161,217,155}
\definecolor{mypurple}{RGB}{190,174,212}
\begin{document}

\maketitle
\thispagestyle{empty}
\pagestyle{empty}

%%%%%%%%%%%%%%%%%%%%%%%%%%%%%%%%%%%%%%%%%%%%%%%%%%%%%%%%%%%%%%%%%%%%%%%%%%%%%%%%
\begin{abstract}
%note that ICRA17 requires that abstract be no more than 1500 charachters
We consider the problem of computing shortest paths in a dense motion-planning roadmap $\calG$.
We assume that~$n$, the number of vertices of $\calG$, is very large.
Thus, using any path-planning algorithm  that directly
searches~$\calG$, running in $O(V\textrm{log}V + E) \approx O(n^2)$ time, 
becomes unacceptably expensive.
We are therefore interested in anytime search to obtain successively shorter feasible paths and converge to the shortest path in~$\calG$.
Our key insight is to provide existing path-planning algorithms with a sequence of increasingly dense subgraphs of~$\calG$.
We study the space of all ($r$-disk) subgraphs of $\calG$.
We then formulate and present two densification strategies for traversing this space which exhibit complementary properties with respect to problem difficulty.
This inspires a third, hybrid strategy which has favourable properties regardless of problem difficulty.
This general approach is then demonstrated and analyzed  using the specific case where a low-dispersion deterministic sequence is used to generate the samples used for~$\calG$. 
Finally we empirically evaluate the performance of our strategies for random scenarios in $\mathbb{R}^{2}$ and $\mathbb{R}^{4}$ and on manipulation planning problems
for a 7 DOF robot arm,
and validate our analysis. 

\end{abstract}

\section{INTRODUCTION}
\label{sec:introduction}
Let $\calG$ be a motion-planning roadmap with $n$ vertices embedded in some configuration space (\Cs). 
We consider the problem of finding a shortest path between two vertices of $\calG$. 
Specifically, we are interested in settings, prevalent in motion planning, where testing if an edge of the graph is collision free 
or not is computationally expensive. We call such graphs Explicit graphs with Expensive Edge-Evaluation or E$^4$-graphs.
Moreover, we are interested in the case where~$n$ is very large,
and where the roadmap is dense, i.e. $|E| = O(n^2)$.
This makes any path-finding algorithm that directly searches~$\calG$, subsequently performing $O(n^2)$ edge evaluations, impractical.
We wish to obtain an approximation of the shortest path quickly and refine it as time permits.
We refer to this problem as \emph{anytime planning on large E$^4$-graphs}.

Our problem is motivated by previous work (\sref{sec:related_work}) on sampling-based motion-planning algorithms
that construct a \emph{fixed} roadmap as part of a preprocessing stage~\cite{KSLO96, BK00, SSH16}. 
These methods are used to efficiently approximate the structure of the \Cs.
When the size of the roadmap is large, even finding a solution, let alone an optimal one, becomes a non-trivial problem requiring specifically-tailored search algorithms~\cite{SSH16}. Our roadmap formulation departs from the PRM setting which chooses a connectivity radius $O(\text{log }n)$ that achieves asymptotic optimality. We are interested in dense, nearly-complete roadmaps that capture as much C-space connectivity information as possible, and probably have one or more paths that are strictly shorter than the optimal path for the standard PRM.

\begin{figure}
    \centering
    \begin{subfigure}[b]{0.25\columnwidth}
        \includegraphics[width=\textwidth]{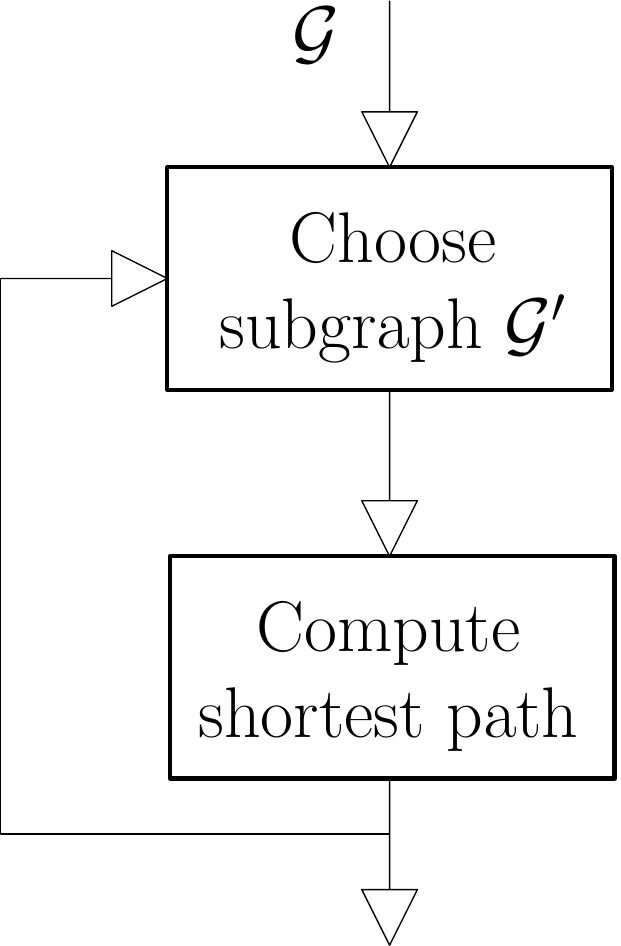}
        \vspace{15mm}
        \caption{}
        \label{fig:flow}
    \end{subfigure}
    \hfill
    \begin{subfigure}[b]{0.7\columnwidth}
        \includegraphics[width=\textwidth]{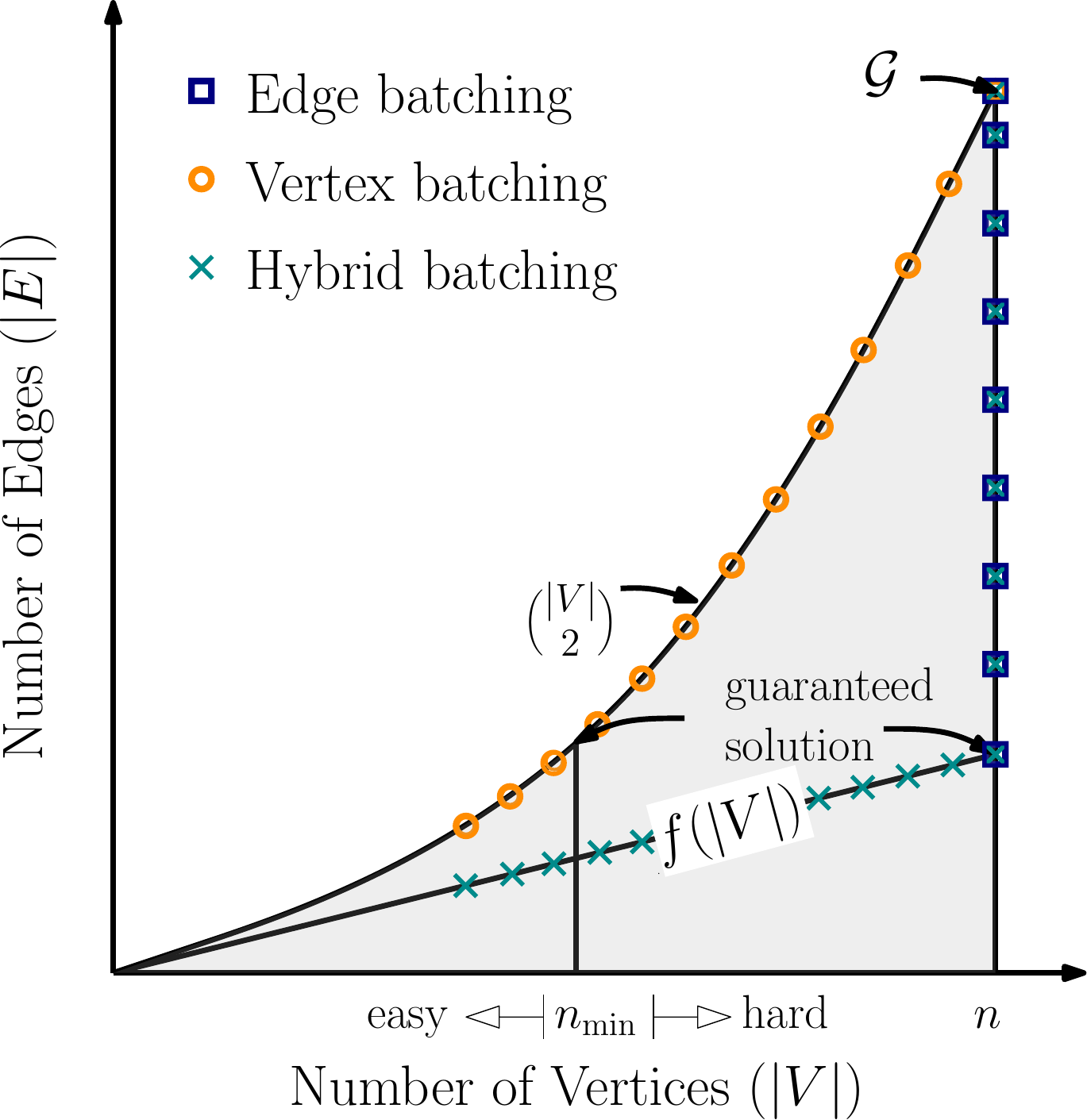}
        \caption{}
        \label{fig:space}
    \end{subfigure}

%    \centering
%    \includegraphics[width=0.8\columnwidth]{figs/}
    \caption{
    (\subref{fig:flow})~Our meta algorithm leverages existing path-planning algorithms and provides them with a sequence of subgraphs.
        (\subref{fig:space})~To do so we consider densification strategies for traversing the space of $r$-disk subgraphs
    of the roadmap $\calG$. ,
    The $x$-axis and the $y$-axis represent the number of vertices and the number of edges (induced by $r$) of the subgraph, respectively. A particular subgraph is defined by a point in this space.
    \emph{Edge batching} searches over all samples and adds edges according to an increasing radius of connectivity. 
    \emph{Vertex batching} searches over complete subgraphs induced by progressively larger subsets of vertices.
    \emph{Hybrid batching} uses the minimal connection radius $f(|V|)$ to ensure connectivity until it reaches~$|V| = n$ and then proceeds like edge batching.
    The harder a problem, i.e.
    the lower the clearance between obstacles, the more vertices are needed by vertex and hybrid batching ($n_{\text{min}}$) to get their first
    feasible solution.}
    \label{fig:ve_batching}
\end{figure}

Our key insight for solving the anytime planning problem in large E$^4$-graphs is to provide existing
path-planning algorithms with a sequence of increasingly dense subgraphs of~$\calG$, using some  \emph{densification strategy}.
At each iteration, we run a shortest-path algorithm on the  current subgraph to obtain an increasingly tighter approximation of the true shortest path.
This favours using incremental search techniques that reuse information between calls.
We present a number of such strategies, and we address the question:

\begin{quote}
How does the densification strategy affect the time at which the first solution is found, and the quality of the solutions obtained?
\end{quote}
%how does the strategy affect the time that a first solution is found and the quality of the solutions obtained.
%Computing the shortest path in each subgraph of~$\calG$ a densification strategy may be done using any shortest-path algorithm.

We focus on $r$-disk subgraphs of $\calG$, i.e. graphs defined by a specific set of vertices where every two vertices are connected if their mutual distance is at most $r$.
This induces a space of subgraphs (\figref{fig:ve_batching}) defined by the number of vertices and the connection radius (which, in turn, defines the number of edges).
We observe two natural ways to traverse this space.
The first is to define each subgraph over the entire set of vertices and incrementally add \emph{batches of edges} by increasing~$r$ 
(vertical line at $|V| = n$ in \figref{fig:ve_batching}).
Alternatively, we can incrementally add \emph{batches of vertices} and, at each iteration, consider the complete graph ($r = r_{\text{max}}$) defined over the current set of vertices
(parabolic arc $|E| = O(|V|^2)$ in \figref{fig:ve_batching}).
We call these variants \emph{edge batching} and \emph{vertex batching}, respectively.
Vertex batching and edge batching seem to be better suited for easy and hard problems, respectively,
as visualized and explained in Fig.~\ref{fig:viz2d_easy} and Fig.~\ref{fig:viz2d_hard}.
This analysis motivates our \emph{hybrid batching} strategy, which is more robust to problem difficulty.

\begin{comment}
Intuitively, edge batching is motivated by the fact that the configuration space is reasonably approximated by the dense set of~$N$ samples. As~$N$ is large, paths may be approximated by sequences of small edges connecting nearby vertices.
Vertex batching, on the other hand, is motivated by the desire to use long edges in the \Cs and exploit the entire possible data provided by even rough approximations of the \Cs.
\end{comment}

Our main contribution is the formulation and analysis of various densification strategies to traverse the space of subgraphs of~$\calG$ (\sref{sec:approach}).
We analyse the specific case where the vertices of~$\calG$ are obtained from a low-dispersion deterministic sequence  (\sref{sec:instantiation}). Specifically, we describe the structure of the space of subgraphs and demonstrate the tradeoff between effort and bounded suboptimality for our densification strategies. 
Furthermore, we explain how this tradeoff varies with problem difficulty, which is measured in terms of the clearance of the shortest path in~$\calG$. 

We discuss implementation decisions and parameters that allow us to efficiently use our strategies in dense~E$^4$ graphs in \sref{sec:implementation}.
We then empirically validate our analysis on several random scenarios in $\mathbb{R}^{2}$ and $\mathbb{R}^{4}$ and on manipulation planning problems
for a 7 DOF robot arm (\sref{sec:experiments}). 
Finally, we discuss directions of future work (\sref{sec:conclusion}).

%The rest of the paper is organized as follows:
%In \sref{sec:related_work} we provide related work.
%We formally define our problem in \sref{sec:problem_formulation} followed by our algorithmic approach 
%and analysis in \sref{sec:approach}. We specify a number of implementation
%decisions and parameters in \sref{sec:implementation}, and evaluate our
%algorithms in practice on a number of planning problems in \sref{sec:experiments}.
%Finally, we discuss directions of future work in \sref{sec:conclusion}.

% FOR batching strategies
\begin{figure}
    \begin{subfigure}[b]{0.325\columnwidth}
    \centering
        \frame{\includegraphics[width=\textwidth]{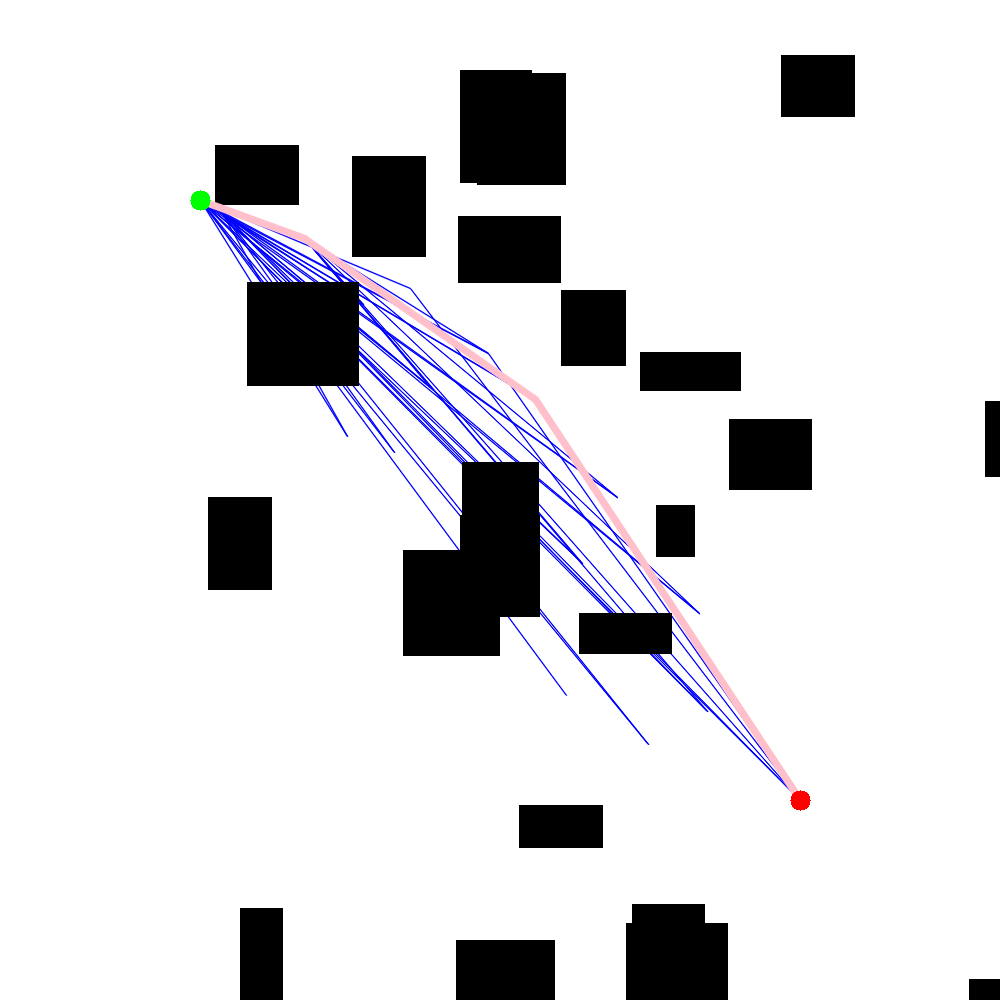}}
        \caption{40 checks}
        \label{fig:viz2d_vertex_easy1}
    \end{subfigure}
    \begin{subfigure}[b]{0.325\columnwidth}
    \centering
        \frame{\includegraphics[width=\textwidth]{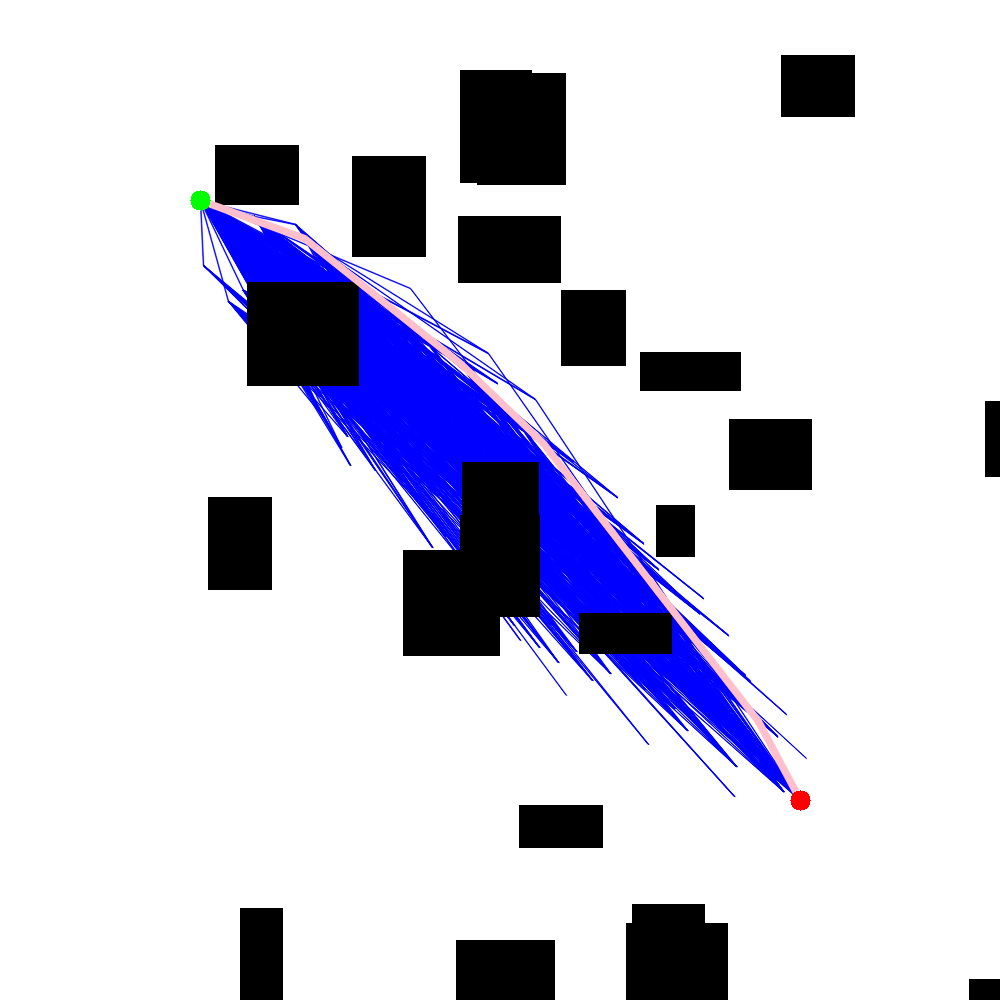}}
        \caption{953 checks}
        \label{fig:viz2d_vertex_easy2}
    \end{subfigure}
    \begin{subfigure}[b]{0.325\columnwidth}
    \centering
        \frame{\includegraphics[width=\textwidth]{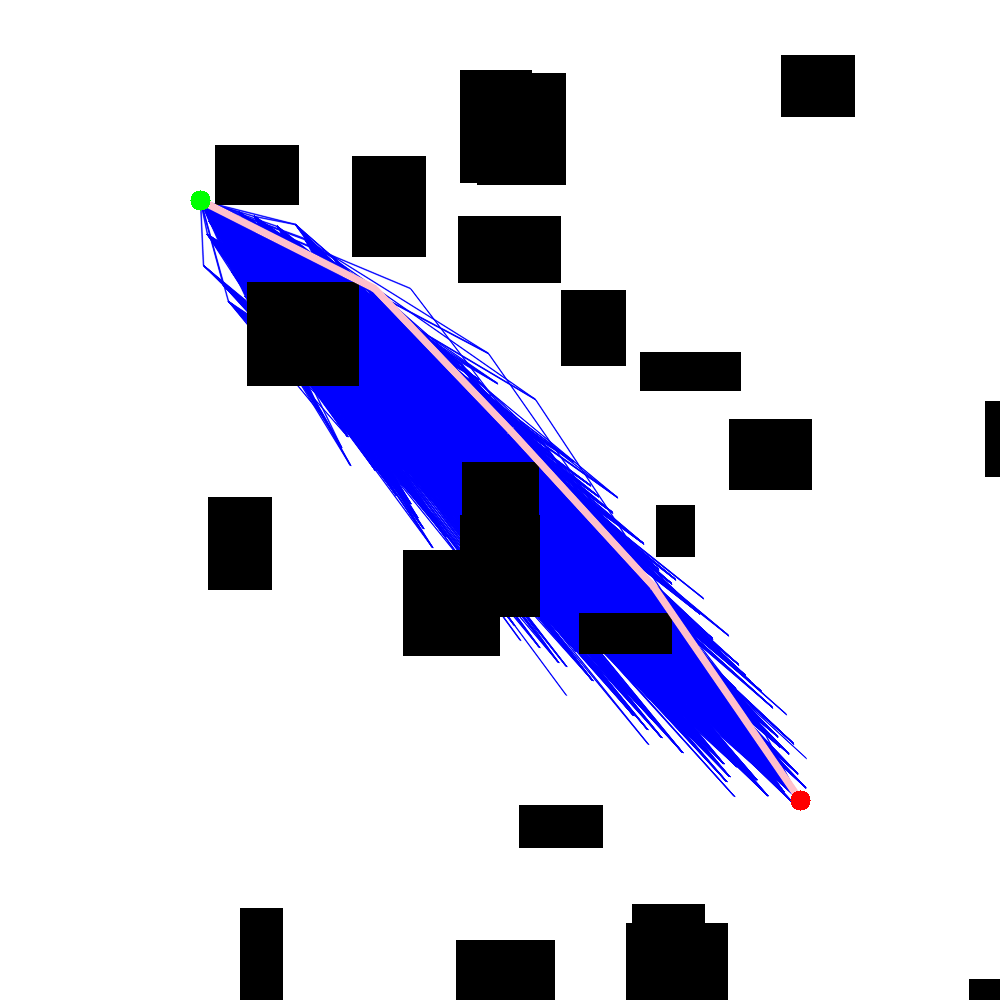}}
        \caption{6,310 checks}
        \label{fig:viz2d_vertex_easy3}
    \end{subfigure}
    \begin{subfigure}[b]{0.325\columnwidth}
    \centering
        \frame{\includegraphics[width=\textwidth]{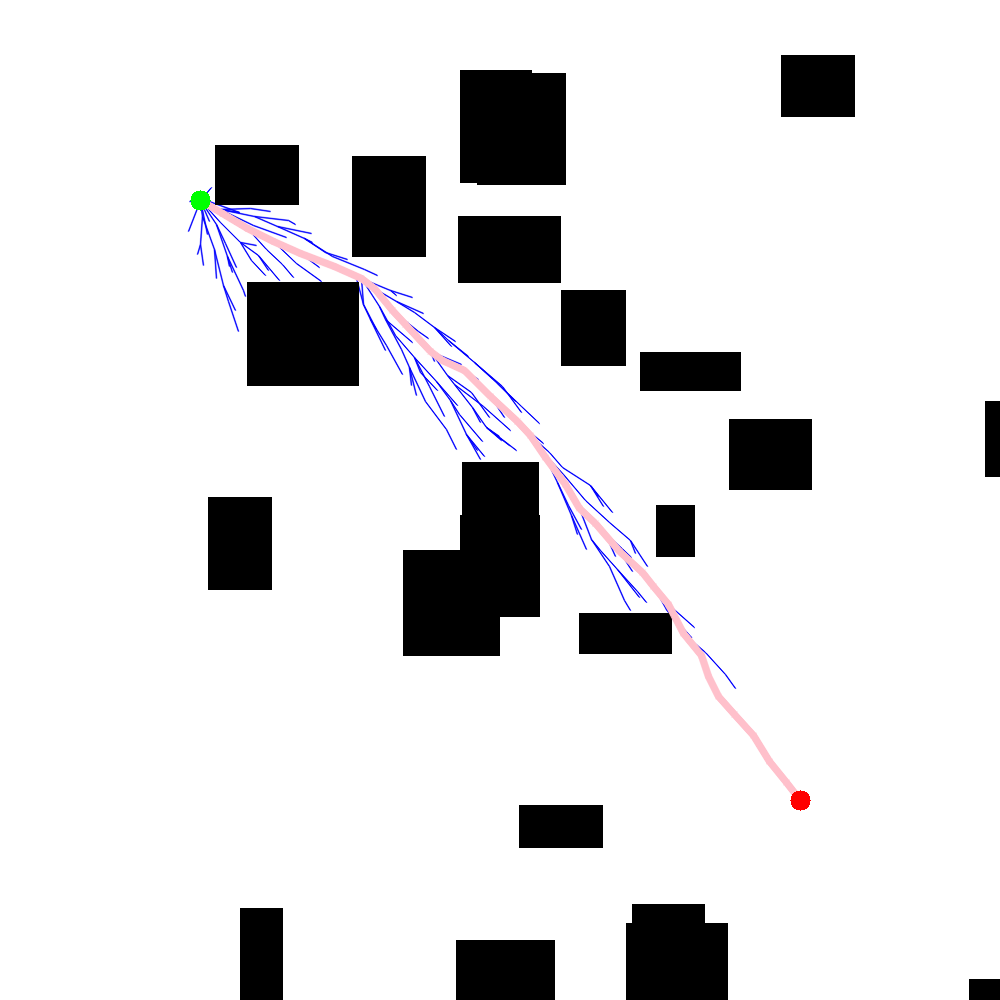}}
        \caption{206 checks}
        \label{fig:viz2d_edge_easy1}
    \end{subfigure}
    \begin{subfigure}[b]{0.325\columnwidth}
    \centering
        \frame{\includegraphics[width=\textwidth]{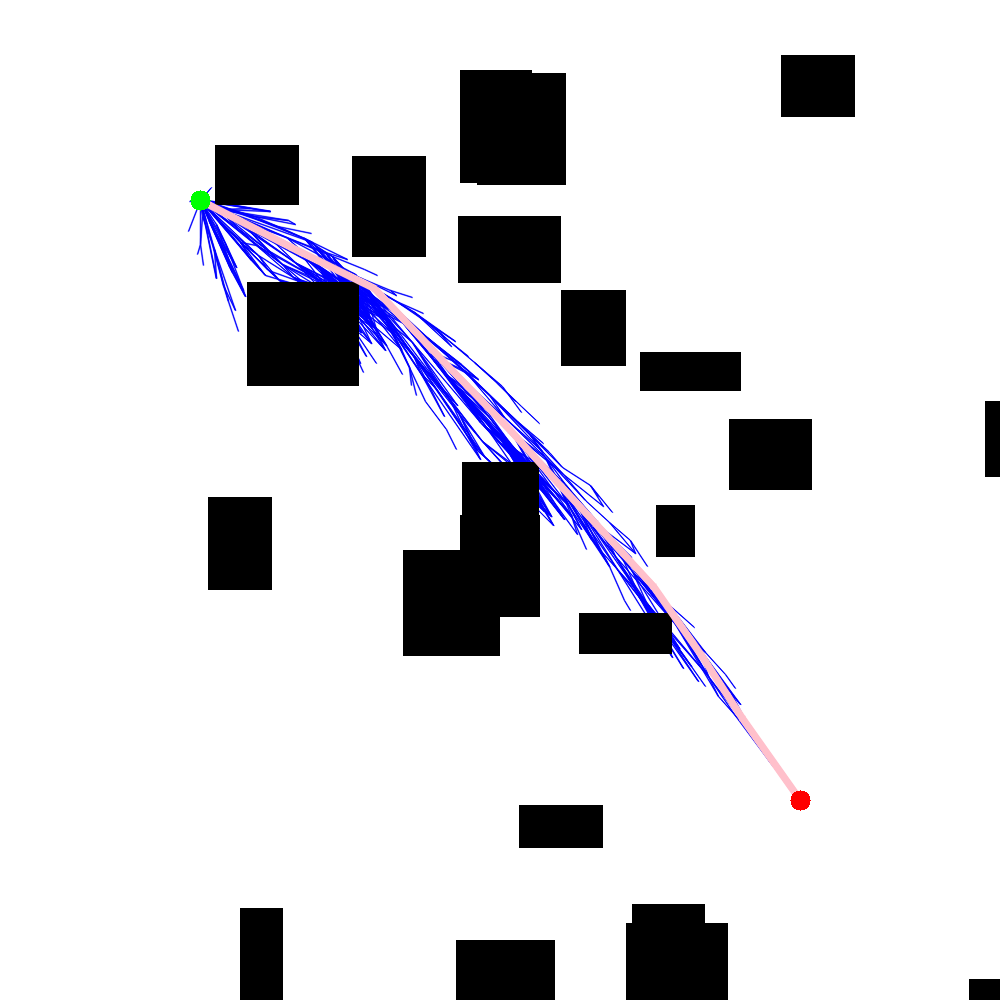}}
        \caption{676 checks}
        \label{fig:viz2d_edge_easy2}
    \end{subfigure}
    \begin{subfigure}[b]{0.325\columnwidth}
    \centering
        \frame{\includegraphics[width=\textwidth]{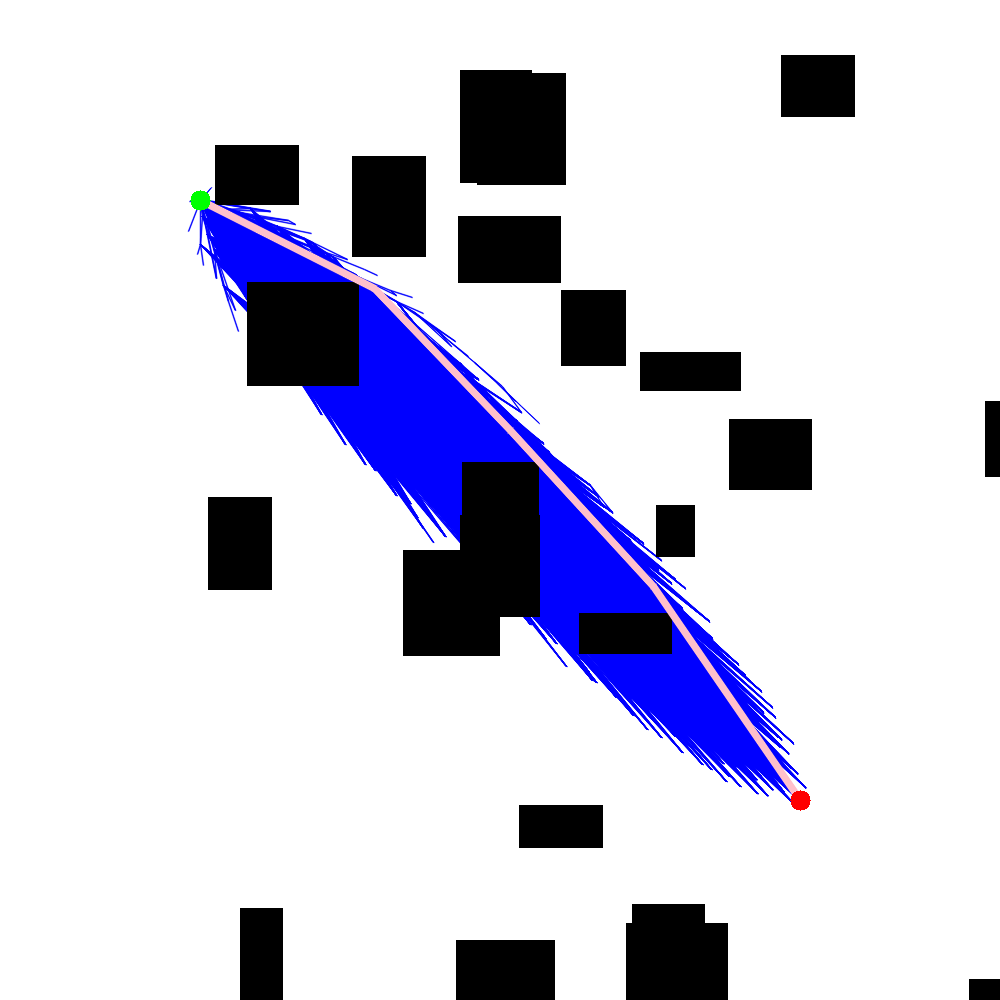}}
        \caption{15,099 checks}
        \label{fig:viz2d_edge_easy3}
    \end{subfigure}

    \caption{\ A visualization of an easy $\mathbb{R}^{2}$ problem, with higher average obstacle gaps. 
    The initial and final paths are shown,
    as well as an intermediate path. Vertex batching (\subref{fig:viz2d_vertex_easy1}-\subref{fig:viz2d_vertex_easy3}) finds the 
    first solution quickly with a sparse set of initial samples. Additional heuristics hereafter help it converge to the optimum with fewer edge evaluations than edge batching (\subref{fig:viz2d_edge_easy1}-\subref{fig:viz2d_edge_easy3}). Note that the coverage of collision checks only appears similar at the end due to resolution
    limits for visualization.}

    \label{fig:viz2d_easy}
\end{figure}

\section{RELATED WORK}
\label{sec:related_work}

\subsection{Sampling-based motion planning}
Sampling-based planning approaches build a graph, or a roadmap, in the \Cs, where vertices are configurations and edges are local paths connecting configurations. 
A path is then found by traversing this roadmap while checking if the vertices and edges are collision free.
Initial algorithms such as  PRM~\cite{KSLO96} and RRT~\cite{LK99} were concerned with finding \emph{a feasible} solution. 
However, in recent years, there has been growing interest in finding high-quality solutions.
Karaman and Frazzoli~\cite{KF11} introduced variants of the PRM and RRT algorithms, called PRM* and RRT*, respectively and proved that, asymptotically,  the solution obtained by these algorithms converges to the optimal solution. 
However, the running times of these algorithms are often significantly higher than their non-optimal counterparts.
Thus, subsequent algorithms have been suggested to increase the rate of convergence to high-quality solutions.
They use different approaches such as 
lazy computation~\cite{BK00,janson2015fast,salzman2015asymptotically}, %FMT*, MPLB* 
informed sampling~\cite{GSB14}, %informed RRT*
pruning vertices~\cite{GSB15},
relaxing optimality~\cite{SH16}, exploiting local information~\cite{choudhury2016regionally} %LBT-RRT
and
lifelong planning together with heuristics~\cite{CPL14}.
In this work we employ several such techniques in order to speed up the convergence rate of our algorithms.

\begin{figure}
    \begin{subfigure}[b]{0.325\columnwidth}
    \centering
        \frame{\includegraphics[width=\textwidth]{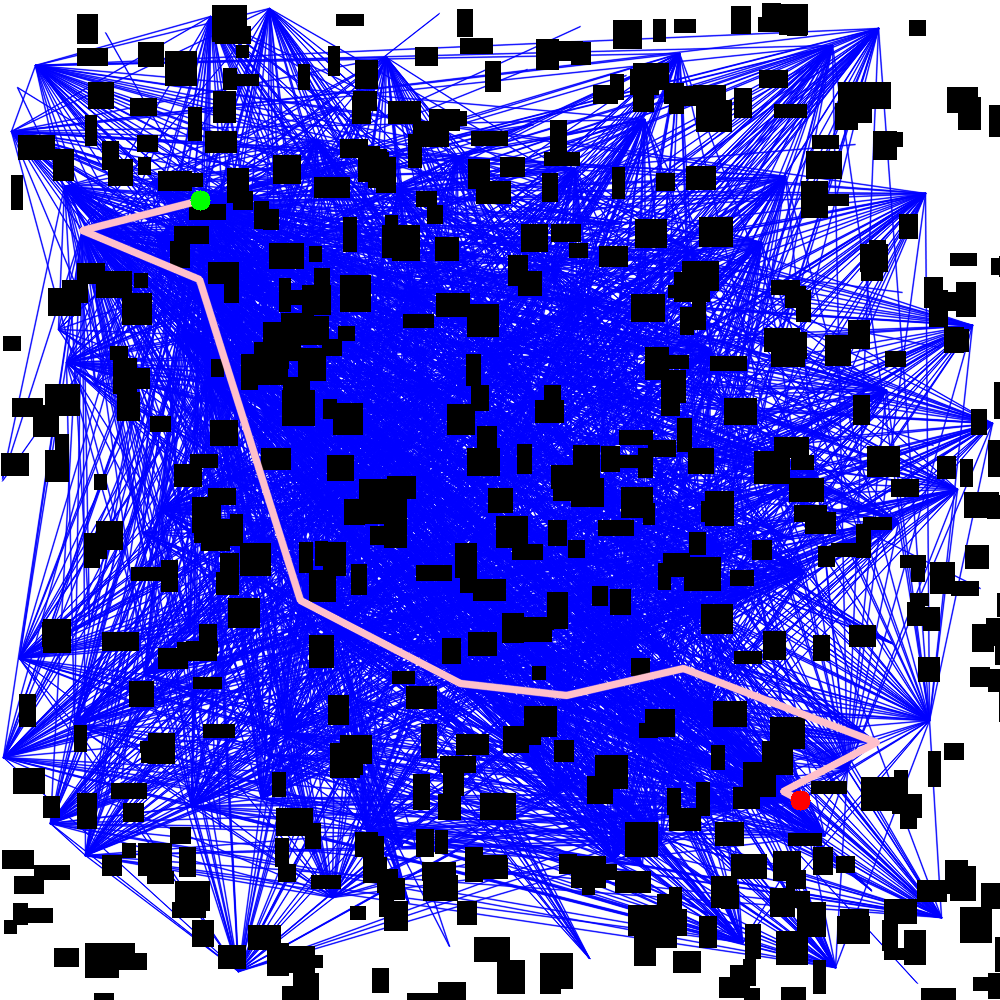}}
        \caption{2,573 checks}
        \label{fig:viz2d_vertex_hard1}
    \end{subfigure}
    \begin{subfigure}[b]{0.325\columnwidth}
    \centering
        \frame{\includegraphics[width=\textwidth]{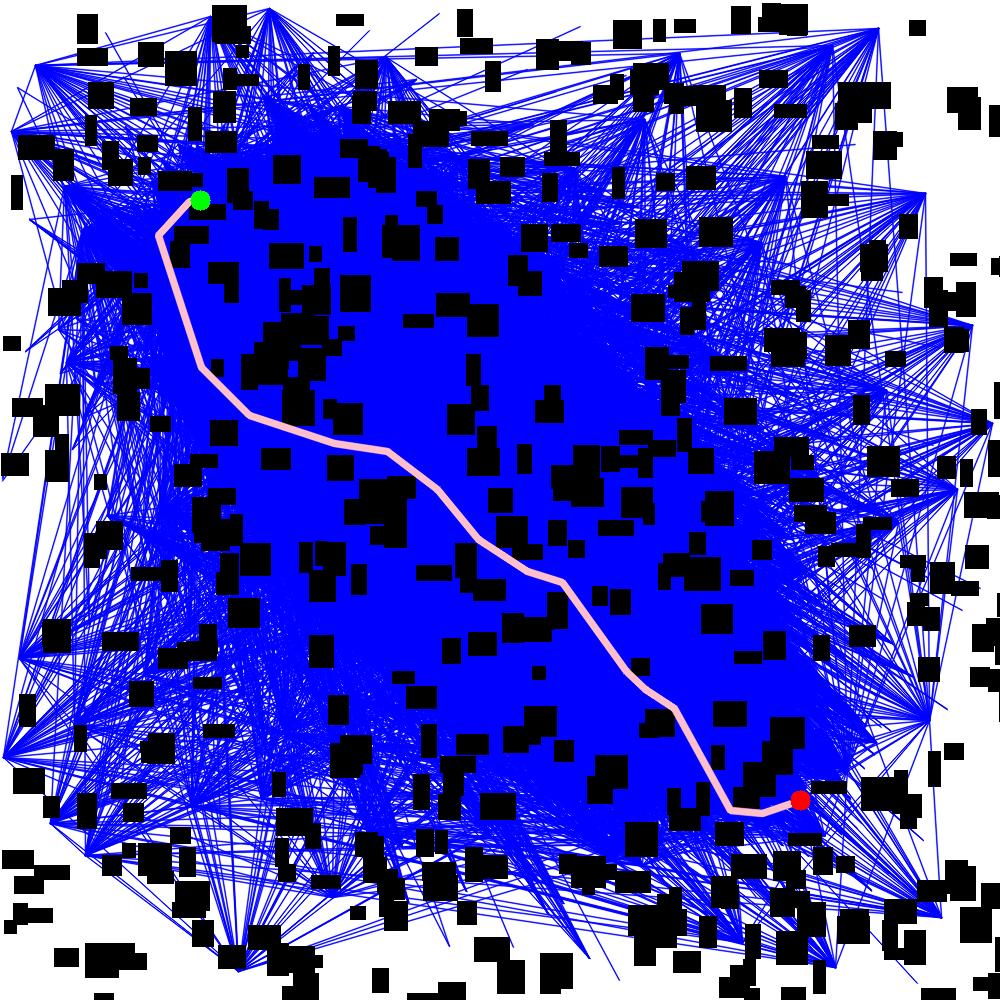}}
        \caption{61,506 checks}
        \label{fig:viz2d_vertex_hard2}
    \end{subfigure}
    \begin{subfigure}[b]{0.325\columnwidth}
    \centering
        \frame{\includegraphics[width=\textwidth]{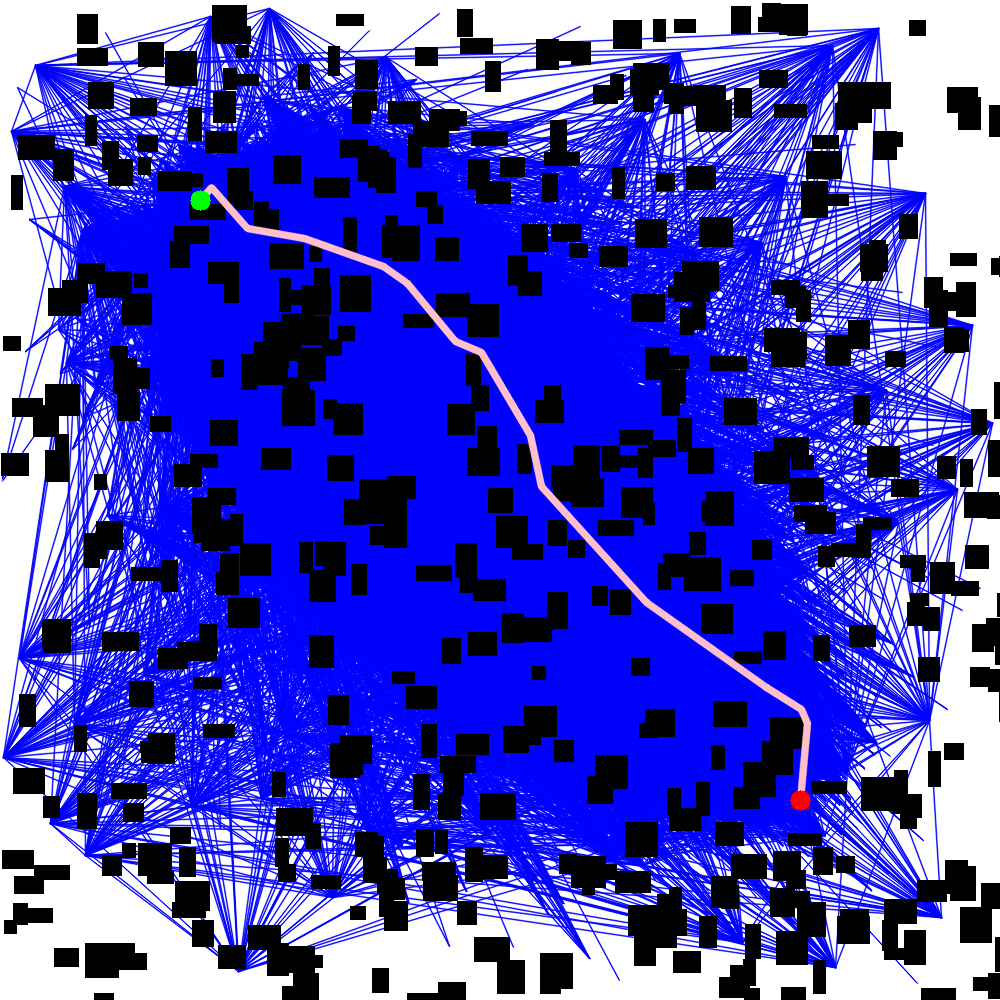}}
        \caption{164,504 checks}
        \label{fig:viz2d_vertex_hard3}
    \end{subfigure}
    \begin{subfigure}[b]{0.325\columnwidth}
    \centering
        \frame{\includegraphics[width=\textwidth]{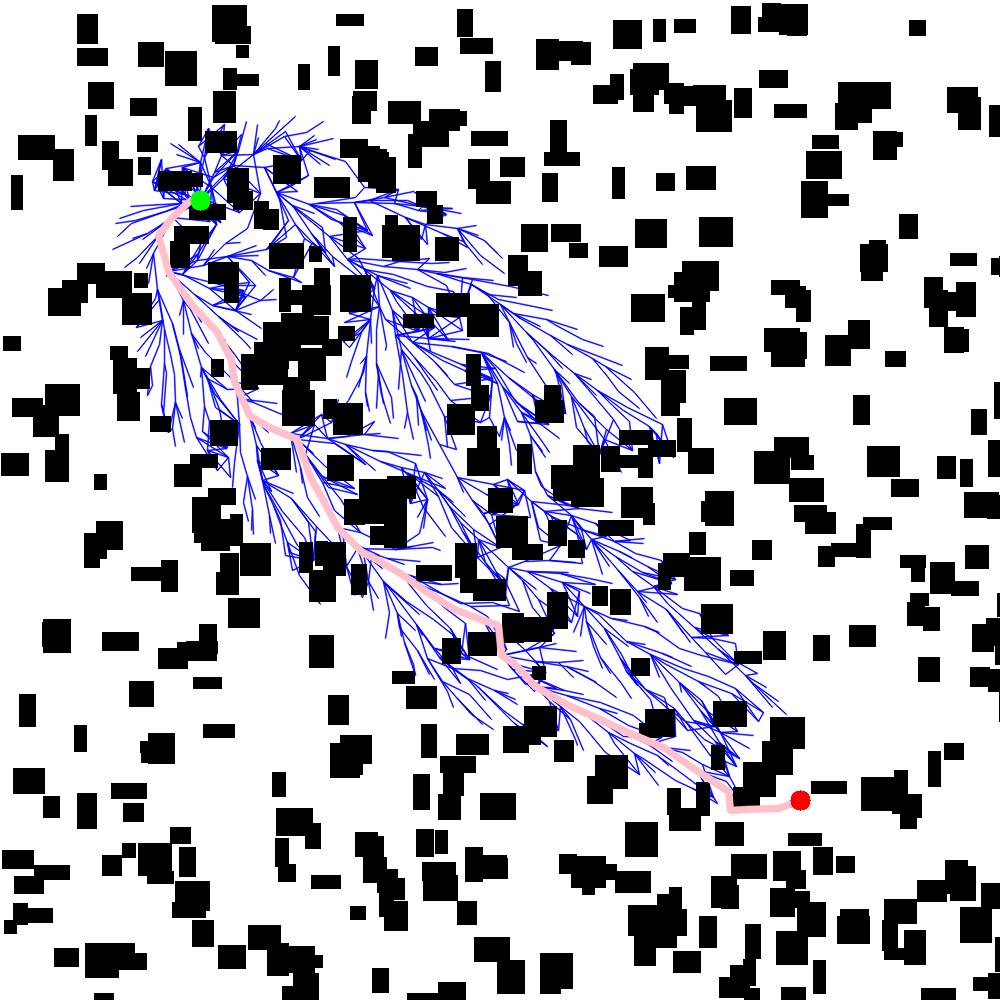}}
        \caption{1390 checks}
        \label{fig:viz2d_edge_hard1}
    \end{subfigure}
    \begin{subfigure}[b]{0.325\columnwidth}
    \centering
        \frame{\includegraphics[width=\textwidth]{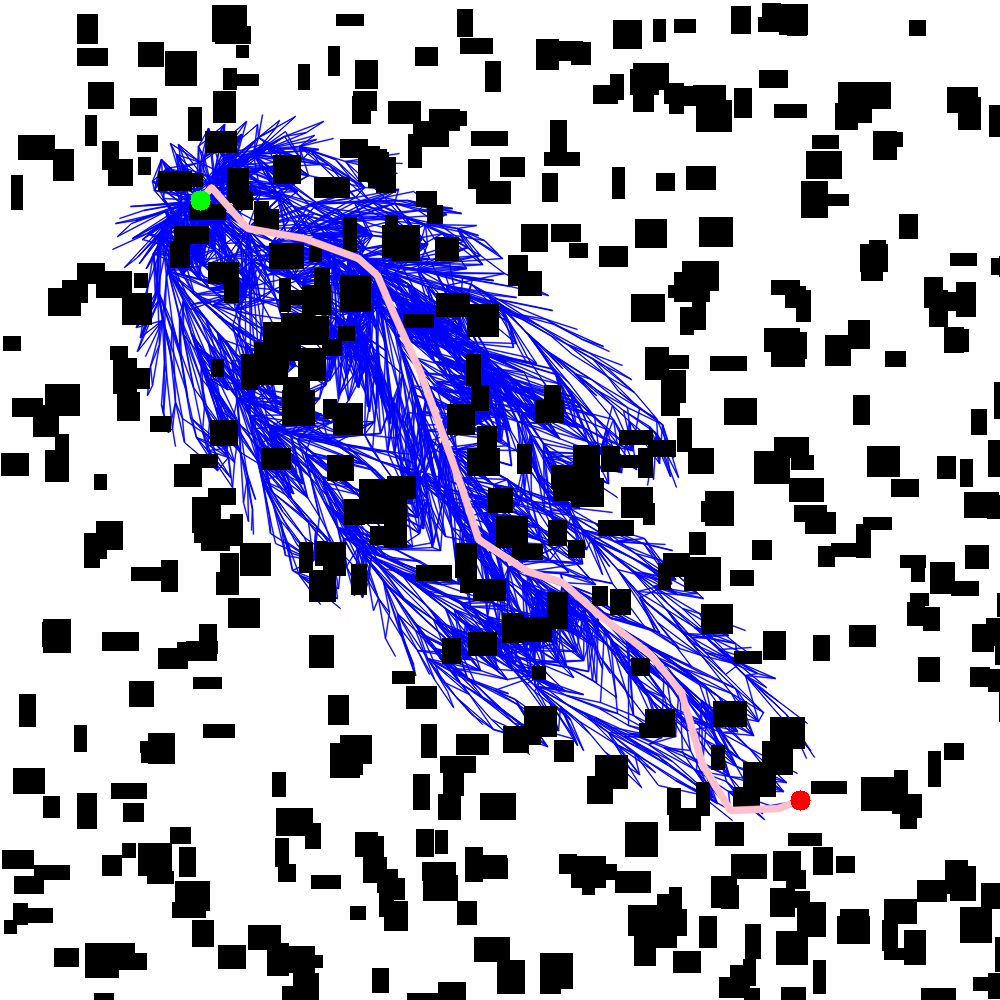}}
        \caption{4,687 checks}
        \label{fig:viz2d_edge_hard2}
    \end{subfigure}
    \begin{subfigure}[b]{0.325\columnwidth}
    \centering
        \frame{\includegraphics[width=\textwidth]{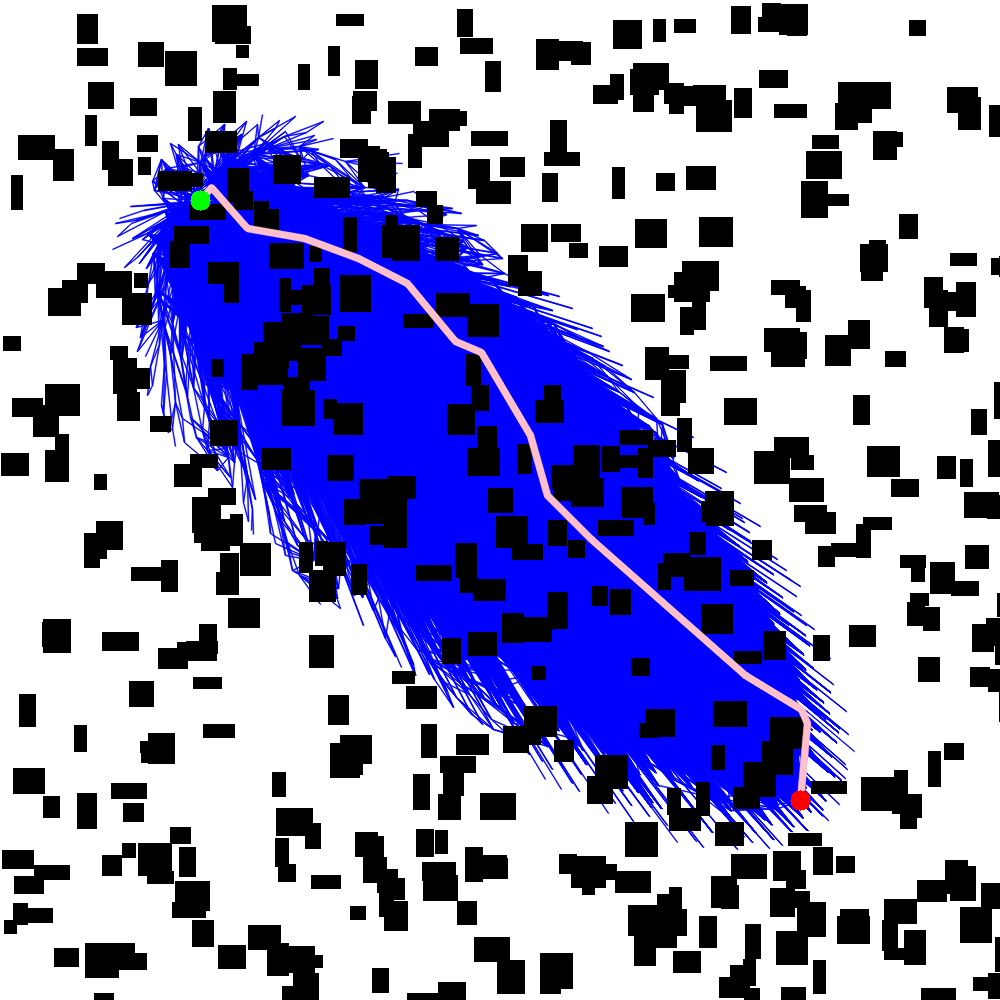}}
        \caption{78,546 checks}
        \label{fig:viz2d_edge_hard3}
    \end{subfigure}

    \caption{A visualization of a hard $\mathbb{R}^{2}$ problem with $10\times$ more obstacles and lower average obstacle gaps. 
    The same set of samples $\calS$ is used as in 
    Fig.~\ref{fig:viz2d_easy}. Because the problem is more difficult, both vertex and edge batching require
    more edge evaluations for finding feasible solutions and converging to the optimum.
    In particular, vertex batching (\subref{fig:viz2d_vertex_hard1}-\subref{fig:viz2d_vertex_hard3})
    requires multiple iterations to find its first solution,
    while edge batching (\subref{fig:viz2d_edge_hard1}-\subref{fig:viz2d_edge_hard3})
    still does so on its first search, albeit with more collision checks than for the easy problem.}
    \label{fig:viz2d_hard}
\end{figure}

\subsection{Finite-time properties of sampling-based algorithms}
\label{subsec:finite}
Extensive analysis has been done on \emph{asymptotic} properties of sampling-based algorithms,
i.e. properties such as connectivity and optimality when the number of samples tends to infinity~\cite{KKL98, KF11}.

We are interested in bounding the quality of a solution obtained using a \emph{fixed} roadmap for a finite number of samples.
When the samples are generated from a \emph{deterministic} sequence, Janson et. al.~\cite[Thm2]{JIP15} give a closed-form solution bounding the quality of the solution of a PRM whose roadmap is an $r$-disk graph. The bound is a function of $r$, the number of vertices $n$ and the \emph{dispersion} of the set of points used. (See \sref{sec:problem_formulation} for an exact definition of dispersion and for the exact bound given by Janson et. al.).

Dobson et. al.~\cite{DMB15} provide similar bounds when randomly sampled i.i.d points are used.
Specifically, they consider a PRM whose roadmap is an $r$-disk graph for a \emph{specific} radius $r = c \cdot \left(\log n / n\right)^{1/d}$ where $n$ is the number of points, $d$ is the dimension and $c$ is some constant.
They then give a bound on the probability that the quality of the solution will be larger than a given threshold.

\subsection{Efficient path-planning algorithms}
We are interested in path-planning algorithms that attempt to reduce the amount of 
computationally expensive edge expansions performed in a search.
This is typically done using heuristics such 
as for A*~\cite{HNR68}, for
Iterative Deepening A*~\cite{K85} and for
Lazy Weighted A*~\cite{CPL14}.
Some of these algorithms, such as Lifelong Planning A*~\cite{koenig2004lifelong} allow recomputing the shortest path in an efficient manner when the graph undergoes changes.
\emph{Anytime} variants of A* such as
Anytime Repairing A*~\cite{LGT03} 
and
Anytime Nonparametric A*~\cite{BSHG11}
efficiently run a succession of A* searches, each with an inflated heuristic. 
This potentially obtains a fast approximation and refines its quality as time permits.
However, there is no formal guarantee that these approaches will decrease search time and they may still search all edges of a given graph~\cite{WR12}.
For a unifying formalism of such algorithms relevant to E$^4$ graphs and additional references, see~\cite{dellin2016unifying}.

\section{NOTATION, PROBLEM FORMULATION AND MATHEMATICAL BACKGROUND }
\label{sec:problem_formulation}
We provide standard notation and define our problem concretely.
We then provide necessary mathematical background about the \emph{dispersion} of a set of points.

\subsection{Notation and problem formulation}
Let~$\calX$ denote a $d$-dimensional  \Cs, $\Cfree$ the collision-free portion of $\calX$,  $\Cobs = \calX \setminus \Cfree$ 
its complement
and
let $\rho: \calX \times \calX \rightarrow \R$ be some distance metric.
For simplicity, we assume that $\calX = [0,1]^d$ and that $\rho$ is the Euclidean norm.
%Given $x,y \in \calX$, let $\overline{xy}$ denotes the straight line segment connecting $x$ and $y$ in $\calX$.
Let  $\calS = \{ s_1, \ldots, s_n\}$ be some sequence of points 
where $s_\ell \in \calX$
% for some integer $n \in \N$ 
and denote by $\calS(\ell)$ the first $\ell$ elements of $\calS$. 
We define the $r$-disk graph $\calG(\ell, r) = (V_{\ell},E_{\ell,r})$ 
where
$V_{\ell} = \calS(\ell)$, 
$E_{\ell,r} = \{(u,v) \ | \ u,v \in V_\ell \text{ and } \rho(u,v) \leq r \}$
and each edge $(u,v)$ has a weight $w(u,v) = \rho(u,v)$.
See~\cite{KF11, SSH16c} for various properties of such graphs in the context of motion planning.
Finally, set~$\calG = \calG(n, \sqrt{d})$, namely, the complete\footnote{Using a radius of $\sqrt{d}$ ensures that every two points will be connected due to the assumption that $\calX = [0,1]^d$ and that $\rho$ is Euclidean.} graph defined over $\calS$. 

For ease of analysis we assume that the  roadmap is complete, but our densification strategies and analysis can be extended to \emph{dense} roadmaps that are not complete.
Furthermore, our definition assumes that $\calG$ is embedded in the \Cs. 
Thus, we will use the terms \emph{vertices} and \emph{configurations} as well as \emph{edges} and \emph{paths} in \Cs interchangeably.

A query $\calQ$ is a scenario with start and target configurations. 
Let the start and target configurations be $s_1$ and $s_2$, respectively. 
The obstacles induce a mapping $\calM: \calX \rightarrow \{\Cfree,\Cobs\}$ called a \emph{collision detector} which checks if a configuration or edge is collision-free or not.
Typically, edges are checked by densely sampling along the edge, hence the term \emph{expensive edge evaluation}. 
A feasible path is denoted by $\gamma: [0,1] \rightarrow \Cfree$ where $\gamma[0] = s_1$ and $\gamma[1] = s_2$.
Slightly abusing this notation, 
set $\gamma(\calG(\ell, r))$ to be the shortest collision-free path from $s_1$ to $s_2$ that can be computed in $\calG(\ell, r)$,
its clearance as $\delta(\calG(\ell, r))$ and denote by
$\gamma^* = \gamma(\calG)$ and
$\delta^* = \delta(\calG)$
the shortest path and its clearance that can be computed in $\calG$, respectively.
Note that a path has clearance $\delta$ if every point on the path is at a distance of at least
$\delta$ away from every obstacle.

Our problem calls for finding a sequence of increasingly shorter
feasible paths  $\gamma_0$, $\gamma_1 \ldots$ in $\calG$,  converging
to $\gamma^{*}$.
We assume that $n = |\calS|$ is sufficiently large, and the roadmap covers the space well enough so that for any reasonable set of obstacles, there are multiple feasible paths to be obtained between start and goal. Therefore, we do not consider a case where the entire roadmap is invalidated by obstacles. The large value of $n$ makes any path-finding algorithm that directly searches~$\calG$, thereby performing $O(n^2)$ calls to
the collision-detector, too time-consuming to be practical.

\subsection{Dispersion}
The \emph{dispersion} $D_n(\calS)$ of a sequence $\calS$ is defined as 
$
D_n(\calS) = \sup_{x \in \calX} \min_{s \in \calS} \rho(x, s) 
$.
Intuitively, it can be thought of as the radius of the largest empty ball (by some metric)
that can be drawn around any point in the space $\calX$ without intersecting any point
of $\calS$. A lower dispersion implies a better \emph{coverage} of the space by the points in $\calS$.
When $\calX$ is the $d$-dimensional 
Euclidean space and $\rho$ is the Euclidean distance, deterministic sequences  with dispersion of order $O(n^{-1/d})$ exist.
A simple example is a set of points lying on grid or a lattice.

Other low-dispersion deterministic sequences exist which also  have low discrepancy, i.e. they appear to be random for many purposes.
One such example is the \emph{Halton sequence}~\cite{H60}. We will use them extensively for our analysis because
they have been studied in the context of deterministic motion planning~\cite{JIP15,BLOY01}. 
% Halton sequences are constructed by taking $d$ prime numbers, called generators, one for each dimension.
% Each generator $g$ induces a sequence, called a Van der Corput sequence.
% The $k$'th element of the Halton sequence is then constructed by taking the $k$'th element of each of the $d$ Van der Corput sequences.
For Halton sequences, tight bounds on dispersion exist. Specifically, $D_n(\calS) \leq p_d \cdot n^{-1/d}$ where $p_d \approx d \text{log } d$ is the $d^{th}$ prime number.
Subsequently in this paper, we will use $D_n$ (and not~$D_n(\calS)$) to denote the dispersion of the first $n$ points of~$\calS$.

Janson et. al. bound the length of the shortest path computed over an $r$-disk roadmap constructed using a low-dispersion deterministic sequence~\cite[Thm2]{JIP15}.
Specifically, given start and target vertices, consider all paths $\Gamma$ connecting them which 
have $\delta$-clearance for some $\delta$. Set $\delta_{\text{max}}$ to be the maximal clearance over all such $\delta$.
If $\delta_{\text{max}} > 0$, then
for all $0<\delta \leq \delta_{\text{max}}$ 
set $c^*(\delta)$ to be the cost of the shortest path in $\Gamma$ with $\delta$-clearance.
Let $c(\ell,r)$ be the length of the path returned by a shortest-path algorithm on $\calG(\ell,r)$ with $\calS(\ell)$ having dispersion $D_\ell$.
For $2D_\ell < r < \delta$, we have that
\begin{equation}
\label{eq:dispersion_suboptimality}
c(\ell,r)
\leq 
\left( 
    1  + \frac{2D_\ell}{r - 2D_\ell}
\right) \cdot c^*(\delta).
\end{equation}

Notably, for $n$ random i.i.d. points, 
the lower bound on the dispersion is $O\left( (\log n / n )^{1/d}\right)$~\cite{N92} which is strictly larger than for deterministic samples.
\\

For domains other than the unit hypercube, the insights from the analysis will generally hold. However, the dispersion bounds may become far more complicated depending on the domain, and the distance metric would need to be scaled accordingly. This may result in the quantitative bounds being difficult to deduce analytically.

\section{Approach}
\label{sec:approach}
We now discuss our general approach of searching over the space of all ($r$-disk) subgraphs of $\calG$.
We start by characterizing the boundaries and different regions of this space. 
Subsequently, we introduce two densification strategies---edge batching and vertex batching.
As we will see, these two are complementary in nature, which motivates our third strategy, which we call hybrid batching.

\subsection{The space of subgraphs}

To perform an anytime search over $\calG$, 
%given the collision detector $\calM$, 
we iteratively search a sequence of graphs
$\calG_0(n_0, r_0) \subseteq \calG_1(n_1, r_1)  \subseteq  \ldots \subseteq \calG_m(n_m, r_m) = \calG$.
If no feasible path exists in the subgraph, we move on to the next subgraph in the sequence, which is more likely to have a feasible path.

\begin{figure}
    \centering
    \includegraphics[width=0.8\columnwidth]{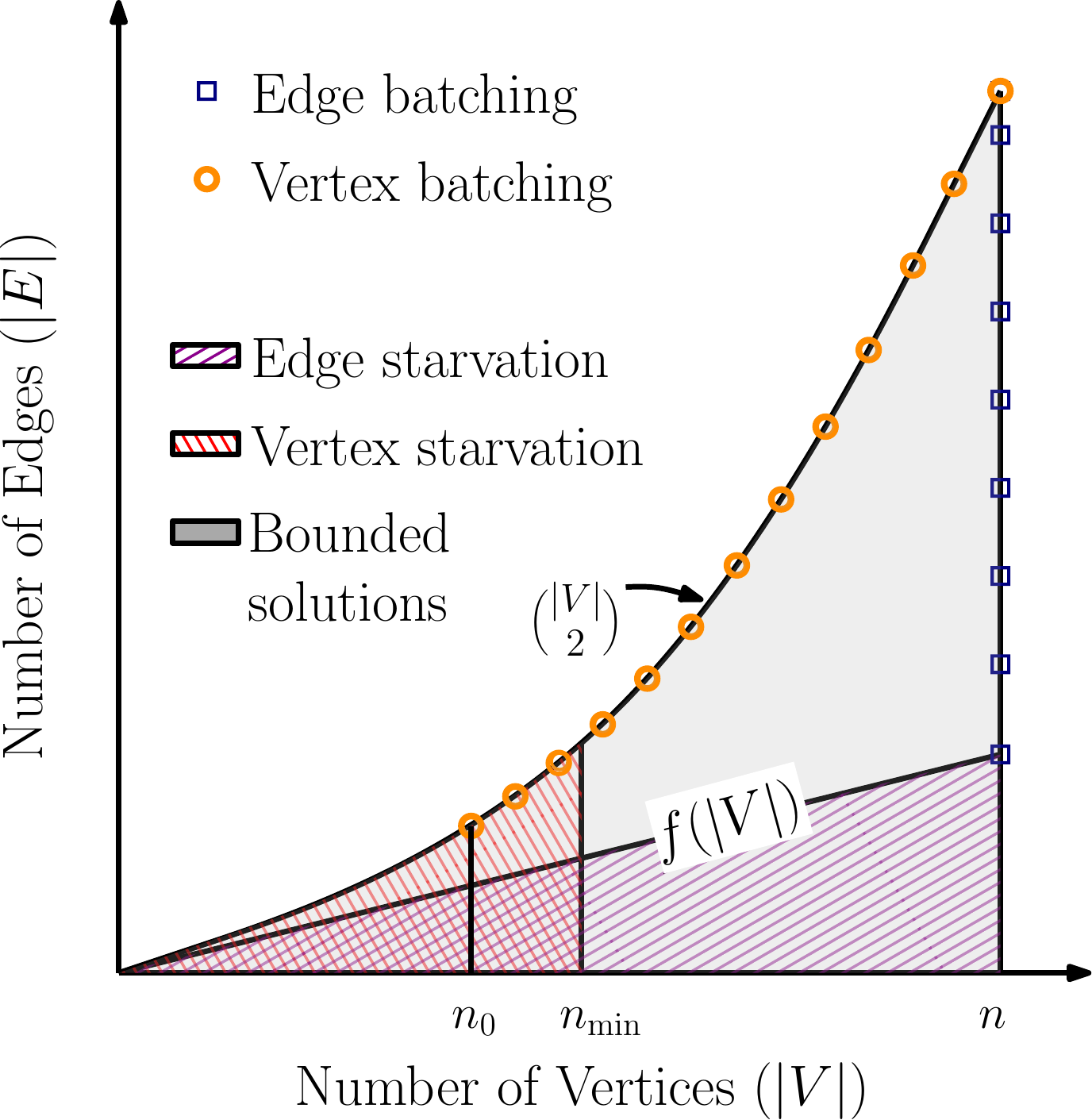}
    \caption{ Regions of interest for the space of subgraphs. \emph{Vertex Starvation} happens in the region
    with too few vertices to ensure a solution, even for a fully connected subgraph. \emph{Edge Starvation}
    happens in the region where the radius $r$ is too low to guarantee connectivity.}
    \label{fig:ve_starvation}
\end{figure}

We use an incremental path-planning algorithm that allows us to efficiently recompute shortest paths.
Our problem setting of increasingly dense subgraphs is particularly amenable to such algorithms.
% by adding the vertices and edges that are in~$\calG_i$ and not in~$\calG_{i-1}$ to the path-planning algorithm  and not compute the shortest path in~$\calG_{i}$ from scratch. 
However, any alternative shortest-path algorithm may be used. 
We emphasize again that we focus on the meta-algorithm of choosing which subgraphs to search. Further details on the implementation of these approaches are provided in \sref{sec:implementation}.

\figref{fig:ve_starvation} depicts the set of possible graphs~$\calG(\ell, r)$ for all choices of  $0 < \ell \leq n$ and $0 < r \leq \sqrt{d}$.
Specifically, the graph depicts $|E_{\ell, r}|$ as a function of $|V_\ell|$.
We discuss \figref{fig:ve_starvation} in detail to motivate our approach for solving the problem of anytime planning in large E$^4$-graphs and the specific sequence of subgraphs we use.
First, consider the curves that define the boundary of all possible graphs:
The vertical line $|V| = n$ corresponds to subgraphs defined over the entire set of vertices, where batches of edges are added as $r$ increases.
The parabolic arc $|E| = |V|\cdot(|V|-1)/2$, corresponds to complete subgraphs defined over increasingly larger sets of vertices.

Recall that we wish to approximate the shortest path $\gamma^*$ which has some minimal clearance $\delta^*$.
Given a specific graph, to ensure that a path that approximates $\gamma^*$ is found, 
two conditions should be met:
(i)~The graph includes some minimal number $n_{\text{min}}$ of vertices. The exact value of~$n_{\text{min}}$ will be a function of the dispersion $D_{n_{\text{min}}}$ of the sequence $\calS$ and the clearance $\delta^*$.
(ii)~A minimal connection radius $r_0$ is used to ensure that the graph is connected.
Its value will depend on the sequence $\calS$ (and not on~$\delta^*$).

Requirement (i) induces a vertical line at $|V| = n_{\text{min}}$.
Any point to the left of this line corresponds to a graph with too few vertices to prove any guarantee that a solution will be found. We call this the \emph{vertex-starvation} region.
Requirement~(ii) induces a curve $f(|V|)$ such that any point below this curve corresponds to a graph which may be disconnected. 
We call this the \emph{edge-starvation} region.
The exact form of the curve depends on the sequence~$\calS$ that is used. 
The specific value of $n_{\min}$ and the form of $f(|V|)$  when Halton sequences are used is provided in~\sref{sec:instantiation}.

Any point outside the starvation regions represents a graph~$\calG(\ell,r)$ such that the length of $\gamma(\calG(\ell,r))$ may be bounded.
For a discussion on specific bounds, see \sref{subsec:finite}.
For a visualization of the different regions, see~\figref{fig:ve_starvation}.

\subsection{Edge and vertex batching}
Our goal is to search increasingly dense subgraphs of~$\calG$. 
This corresponds to a sequence of points on the space of subgraphs (\figref{fig:ve_starvation}) that ends at the upper right corner of the space. 
Two natural strategies emerge from this.
We defer the discussion on the choice of parameters used for each strategy to~\sref{sec:implementation}.

\begin{comment}
We consider two variants of the above meta-algorithm, which we term \emph{edge batching} and \emph{vertex-batching}. These variants differ with respect to the set of vertices and edges considered at each iteration.
For both variants we set $m = O(\log n)$
\end{comment}
%
\subsubsection{Edge batching} 
All subgraphs include the complete set of vertices $\calS$ and the edges are incrementally added
via an increasing connection radius.
Specifically, $\forall i~n_i = n$ and $r_{i} = \eta_{e} r_{i-1}$ where
$\eta_{e} > 1$ and $r_{0}$ is some small initial radius. 
Here, we choose
$r_0 = O(f(n))$, where $f$ is the \emph{edge-starvation} boundary
curve defined previously.
%, where $f(n)$ is the minimal radius to ensure connectivity using $r$-disk graphs.
%Specifically, 
%$f(n) = O\left(n^{-1/d}\right)$ for low-dispersion deterministic sequences
%and 
%$f(n) = O\left( \left(\log n / n\right)^{-1/d}\right)$
%for random i.i.d sequences~\cite{JIP15, KF11, SSH16b}.
Using \figref{fig:ve_starvation}, this induces a sequence of points along the vertical line at $|V| = n$ starting from $|E| = O(n^2 r_0^d)$ and ending at $|E| = O(n^2)$.
\subsubsection{Vertex batching} 
In this variant, all subgraphs are complete graphs defined over increasing subsets of the complete set of vertices $\calS$.
Specifically $\forall i~r_i = r_{\text{max}} = \sqrt{d}$,
$n_i = \eta_{v} n_{i-1}$ where $\eta_{v} > 1$ and the base term $n_0$ is some small number of vertices. Because we have no priors about the obstacle density or distribution, the chosen $n_0$ is a constant and does not vary due to $n$ or due to the volume of $\Cobs$.
Using \figref{fig:ve_starvation}, this induces a sequence of points along the parabolic arc $|E| = |V|\cdot (|V|-1)/2$
starting from $|V| = n_0$ and ending at $|V| = n$. The vertices are chosen in the same
order with which they are generated by~$\calS$. So, $\calG_0$ has the first $n_0$ samples of
$\calS$, and so on.

\begin{figure}
    \centering
    \includegraphics[width=0.8\columnwidth]{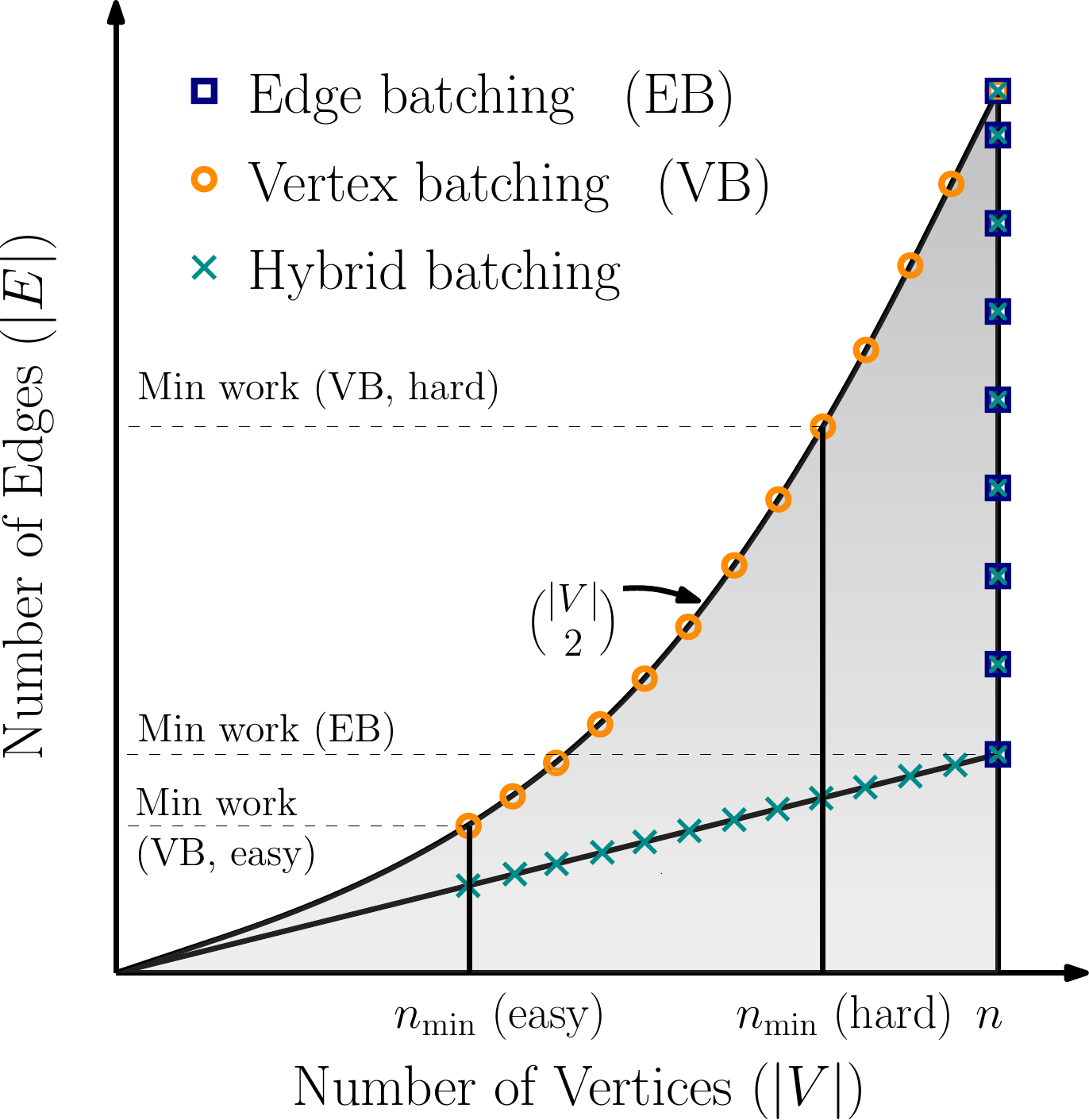}
    \caption{\ Visualization of the work required by our densification strategies as a function of the problem's hardness. Here work is measured as the number of edges evaluated.
    This is visualized using the gradient shading where light gray (resp. dark grey) depicts a small (resp. large) amount of work.
    Assuming $n > n_{\min}$, the amount of work required by edge batching remains the same regardless of problem difficulty.
    For vertex batching the amount of work required depends on the hardness of the problem.
    Here we visualize an easy and a hard problem using~$n_{\min}$ (easy) and $n_{\min}$ (hard), respectively.}
    \label{fig:ve_comparitive}
\end{figure}

Intuitively, the relative performance of these densification
strategies depends on problem hardness. 
We use the clearance of the shortest path, $\delta^{*}$, to represent the hardness of the problem.
This, in turn, defines $n_{\min}$ which bounds the vertex-starvation region.
Specifically we say that a problem is easy (resp. hard) when
$\delta^* \approx \sqrt{d}$ 
(resp. $\delta^* \approx \Omega(D_n(\calS))$).
For easy problems, with larger gaps between obstacles,
%, where  $\delta^{*} \approx O(\sqrt{d})$, 
vertex batching can find a solution quickly with fewer samples and long edges, thereby restricting the work done
for future searches.
In contrast, assuming that $n > n_{\min}$, edge batching will find a solution on the first iteration but 
the time to do so may be far greater than for vertex batching because the number of samples
is so large.
For hard problems
%, where  $\delta^{*} \approx O(D_n)$,
vertex batching may require multiple iterations until the number of samples it uses is large enough and it is out of the vertex-starvation region.
Each of these searches would exhaust the fully connected subgraph before terminating. 
This cumulative effort is expected to exceed that required by edge batching for the same problem, which is expected to find a feasible
albeit sub-optimal path on the first search.
A visual depiction of this intuition is given in \figref{fig:ve_comparitive}. 
%Since we are focused on problems
%with expensive edge evaluations, we treat work due to edge evaluations
%as a reasonable approximation of the total work done by the search. 
%\shushman{The figure will have a detailed caption discussing isocontours and the critical point}

\subsection{Hybrid batching}
Vertex and edge batching exhibit complementary properties for 
problems with varying difficulty.
Yet, when a query~$\calQ$ is given, the hardness of the problem is not known a-priori.
In this section we propose a hybrid approach that exhibits favourable properties, regardless of the hardness of the problem.

This hybrid batching strategy commences by searching over a graph $\calG(n_0,r_0)$
where $n_0$ is the same as for vertex batching and  $r_0 = O(f(n_0))$. 
As long as $n_i < n$, the next batch has 
$n_{i+1} = \eta_{v}n_{i}$ and $r_{i} = O(f(n_{i}))$. 
When $n_{i} = n$ (and $r_{i} = O(f(n))$),
all subsequent batches are similar to edge batching, i.e.,
$r_{i+1} = \eta_{e} r_{i}$ (and $n_{i+1} = n$).

This can be visualized on the space of subgraphs as sampling along the curve $f(|V|)$ from $|V| = n_0$ until $f(|V|)$ intersects $|V| = n$ and then sampling along the vertical line~$|V| = n$.
See~\figref{fig:ve_batching} and~\figref{fig:ve_comparitive} for a mental picture.
As we will see in our experiments, hybrid batching typically performs comparably (in terms of path quality) to vertex batching on easy problems and to edge batching on hard problems.

\section{Analysis for Halton Sequences}
\label{sec:instantiation}

In this section we consider the space of subgraphs and the densification strategies that we introduced in \sref{sec:approach} for the specific case that $\calS$ is a Halton sequence.
We start by describing the boundaries of the starvation regions.
We then continue by simulating the bound on the quality of the solution obtained as a function of the work done for each of our strategies.

\subsection{Starvation-region bounds}
To bound the vertex starvation region we wish to find~$n_{\text{min}}$ after which bounded sub-optimality can be guaranteed
to find the first solution. 
Note that $\delta^*$ is the clearance of the shortest path $\gamma*$ in $\calG$ connecting $s_1$ and $s_2$,
that~$p_d$ denotes the $d^{th}$ prime and $D_n \leq p_d / n^{1/d}$ for Halton sequences.
For~\eref{eq:dispersion_suboptimality} to hold we require that $2D_{n_{\text{min}}} < \delta^{*}$.
Thus,
$$
2D_{n_{\text{min}}} < \delta^{*} 
\Rightarrow
%2p_d /  n_{\text{min}}^{1/d} < \delta^{*}
2\frac{p_d}{n_{\text{min}}^{1/d}} < \delta^{*}
\Rightarrow
n_{\text{min}} > \left( \frac{2p_d}{\delta^*}\right)^d. 
$$
%\begin{align*}
%%2D_{n_0} < \sqrt{d} &\implies 2d\text{ ln}d n_{0}^{-\frac{1}{d}} < \sqrt{d}
%%\implies n_0 > 2^{d} d^{\frac{d}{2}} \text{ ln}^{d} d \\
%2D_{n_{\text{min}}} < \delta^{*} 
%&\Rightarrow
%2d(\ln d) n_{\text{min}}^{-\frac{1}{d}} < \delta^{*}
%\Rightarrow
%n_{\text{min}} > \left( \frac{2d \ln d}{\delta^*}\right)^d 
%%
%%\frac{2^{d} d^{d} \text{ ln}^{d} d}{\delta^{*^d}}
%\end{align*}
Indeed, one can see that as the problem becomes harder (namely, $\delta^*$ decreases),
$n_{\text{min}}$ and 
the entire vertex-starvation region grows. 

We now  show that for Halton sequences, the edge-starvation region has a linear boundary, i.e. $f(|V|) = O(|V|)$.
Using \eref{eq:dispersion_suboptimality} we have that the minimal radius $r_{\min}(|V|)$ required for a graph with $|V|$ vertices is 
$$
r_{\min}(|V|) > 2D_{|V|} 
\Rightarrow 
r_{\min}(|V|) > \frac{2 p_d}{(|V|)^{1/d}}.
$$
For any $r$-disk graph~$\calG(\ell, r)$, the number of edges is $|E_{\ell, r}| = O\left(\ell^2 \cdot r^{d}\right)$.
In our case,
$$
f(|V|) 
= O\left(|V|^2 \cdot r_{\min}^{d}(|V|)\right)
= O(|V|).
$$

\begin{figure}
    \centering
    \begin{subfigure}[b]{0.49\columnwidth}
    \centering
        \includegraphics[width=\textwidth]{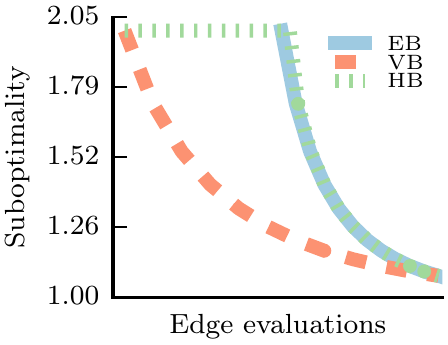}
        \caption{ Easy problem}
        \label{fig:ratio_easy}
    \end{subfigure}
    \begin{subfigure}[b]{0.49\columnwidth}
    \centering
        \includegraphics[width=\textwidth]{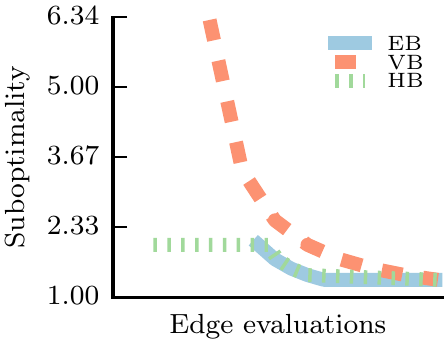}
        \caption{ Hard problem}
        \label{fig:ratio_hard}
    \end{subfigure}

    \caption{ A simulation of the work-suboptimality tradeoff for vertex, edge  and hybrid batching. 
    Here we chose $n = 10^6$ and $d=4$. 
    The easy and hard problems have
    $\delta^{*} = \sqrt{d} /2$ and $\delta^{*} = 5D_{n}$, respectively. 
    The plot is produced by sampling points along the curves $|V| = n$ and $|E| = |V| \cdot (|V|-1|)/2$ and using the respective values in \eref{eq:dispersion_suboptimality}.
    Note that $x$-axis is in log-scale.
}
    \label{fig:ratio}
\end{figure}

\subsection{Effort-to-quality ratio}
We now compare our densification strategies in terms of their worst-case anytime performance. Specifically, we plot the cumulative amount of work as subgraphs are searched, measured by the maximum number of edges that may be evaluated, as a function of the bound on the quality of the solution that may be obtained using \eref{eq:dispersion_suboptimality}.
We fix a specific setting (namely $d$ and $n$) and simulate the work done and the suboptimality using the necessary formulae.
This is done for an easy and a hard problem. See~\figref{fig:ratio}. 

Indeed, this simulation coincides with our discussion on properties of both batching strategies with respect to the problem difficulty.
Vertex batching outperforms edge batching on easy problems and vice versa. Hybrid batching lies somewhere in between the two approaches
with the specifics depending on problem difficulty.

\section{IMPLEMENTATION}
\label{sec:implementation}

\subsection{Search Parameters}
We choose the parameters for each densification strategy such that the number of batches is $O(\text{log}_2n)$.

\subsubsection{Edge Batching}
We set $\eta_e = 2^{1/d}$ . 
Recall that for $r$-disk graphs, the average degree of vertices is~$n \cdot r_{i}^{d}$, therefore this value (and hence the number of edges) is doubled after each iteration. 
We set $r_0 = 3\cdot n^{-1/d}$.

\subsubsection{Vertex Batching}
We set the initial number of vertices~$n_0$ to be $100$,
irrespective of the roadmap size and problem setting,
and set $\eta_v = 2$.
After each batch
we double the number of vertices.

\subsubsection{Hybrid Batching}
The parameters are derived from those used for vertex and edge batching.
We begin with $n_0 = 100$, and after each batch we increase the vertices
by a factor of~$\eta_v = 2$. For these searches, i.e. in the region where $n_i < n$,
we use $r_i = 3 \cdot n^{-1/d}$. This ensures the same radius
at $n$ as for edge batching. Subsequently, we increase the radius as
$r_i = \eta_e r_{i-1}$, where $\eta_e = 2^{1/d}$.

\subsection{Optimizations}

Our analysis and intuition is agnostic to any specific algorithms or implementations. 
However, for these densification strategies to be useful 
in practice, we employ certain optimizations.

\begin{comment} 
To motivate this, consider the total work done (measured in edge evaluations) by the batching strategies in the worst case. 
Recall that for naively searching the entire roadmap, this work is ${n\choose2} \approx \frac{n^2}{2}$.
Furthermore for our worst-case analysis, we assume that both batching strategies run for $\text{log}_2 n$ iterations. 

For vertex batching, the number of
vertices at a given iteration is~$n_i = 2^i$ and the number of vertices is~$|E_i| \approx n_i^2 / 2 = 2^{2i-1}$.
The worst-case work complexity is:
$$
\sum\limits_{i=1}^{\text{log}_2n} |E_i| 
= 
\sum\limits_{i=1}^{\text{log}_2n} 
2^{2i-1}
=
\frac{2n^2}{3} + \text{lower order terms}
$$
%\[
%\sum\limits_{i=1}^{\text{log}_2n} \big(2^{i} \text{log}_2 2^{i} + (2^{i})^{2}\big) = \frac{4n^2}{3} + \text{lower order terms}
%\]
For edge batching, with low-dispersion sequences, we have that $|E_{i}| \approx n2^{i}$. The worst-case work
complexity is:
$$
\sum\limits_{i=1}^{\text{log}_2n} |E_i| 
=
\sum\limits_{i=1}^{\text{log}_2n} n2^{i}
=
2n^2 + \text{lower order terms}
$$
Note that hybrid batching's worst-case work would be  greater than edge batching, so we omit
the expression for brevity. 
Therefore, in all cases, the worst case work done by any batching strategy
is strictly larger than searching $\calG$ directly.
Thus, we consider numerous optimizations and their effect 
on the overall performance.
\end{comment}

\subsubsection{Search Technique}

Each subgraph is searched using Lazy $\text{A}^{*}$~\cite{CPL14} with 
incremental rewiring as in $\text{LPA}^{*}$~\cite{koenig2004lifelong}. For details,
see the search algorithm used for a single batch of $\text{BIT}^{*}$~\cite{GSB15}.
This lazy variant of $\text{A}^{*}$ has been shown to outperform other path-planning techniques for motion-planning search problems
with expensive edge evaluations~\cite{dellin2016unifying}.

\subsubsection{Caching Collision Checks}
Each time the collision-detector~$\calM$ is called for an edge, we store the ID of the edge along with the result using a hashing data structure. 
Subsequent calls for that specific edge are simply lookups in the hashing data structure which incur negligible running time.
Thus,~$\calM$ is called for each edge at most once.

\begin{figure*}[h]
    \begin{subfigure}[b]{0.5\columnwidth}
        \centering
        \includegraphics[width=\textwidth]{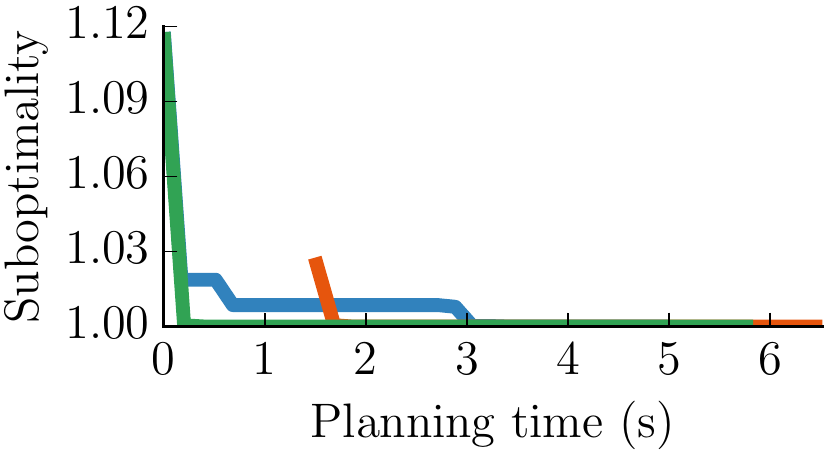}
        \caption{ $\mathbb{R}^{2}$ - Easy}
        \label{fig:results_2d_easy}
    \end{subfigure}
    \begin{subfigure}[b]{0.5\columnwidth}
        \centering
        \includegraphics[width=\textwidth]{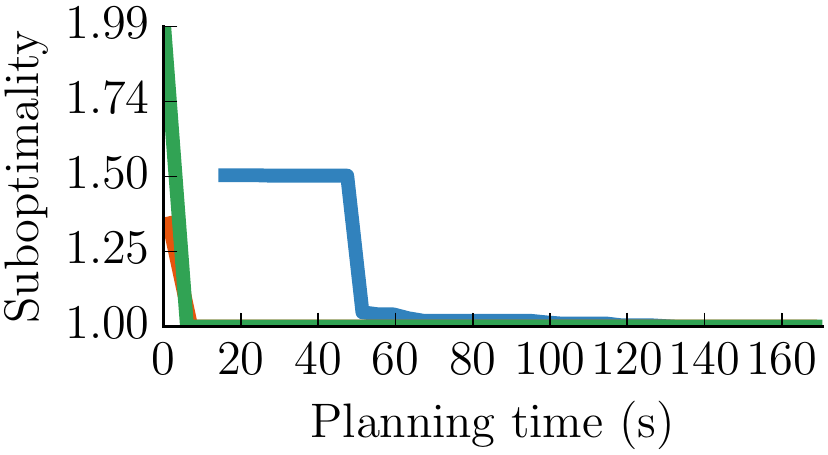}
        \caption{ $\mathbb{R}^{2}$ - Hard}
        \label{fig:results_2d_hard}
    \end{subfigure}
    \begin{subfigure}[b]{0.5\columnwidth}
        \centering
        \includegraphics[width=\textwidth]{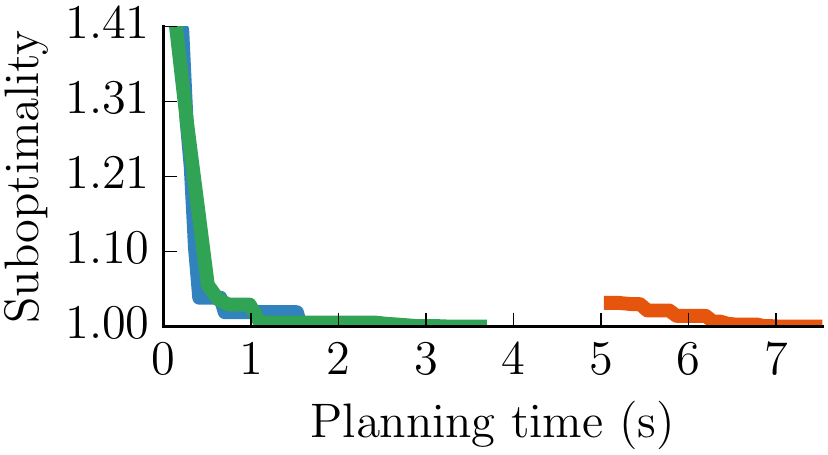}
        \caption{ $\mathbb{R}^{4}$ - Easy}
        \label{fig:results_4d_easy}
    \end{subfigure}
    \begin{subfigure}[b]{0.5\columnwidth}
        \centering
        \includegraphics[width=\textwidth]{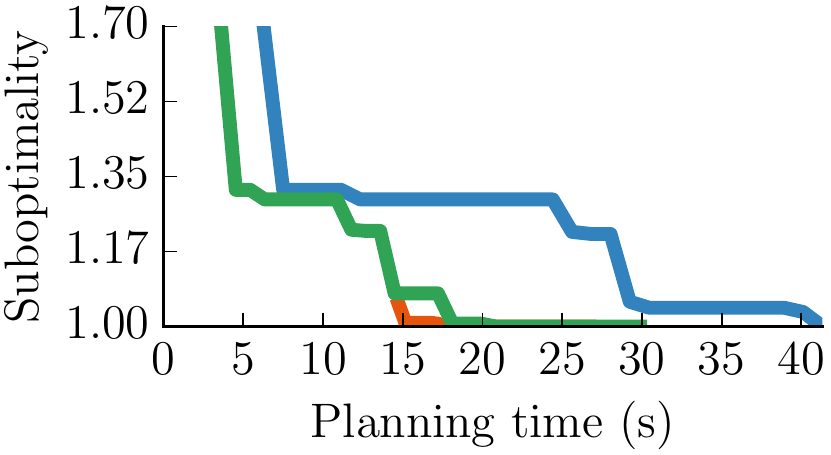}
        \caption{ $\mathbb{R}^{4}$ - Hard}
        \label{fig:results_4d_hard}
    \end{subfigure}

    \caption{ Experimental results in random unit hybercube scenarios for
    \protect\tikz{\protect\node[fill=myblue,draw=black]{};}\; vertex batching
    \protect\tikz{\protect\node[fill=myred,draw=black]{};}\; edge batching, and \protect\tikz{\protect\node[fill=mygreen,draw=black]{};}\; hybrid batching.
    The $y$-axis is the ratio between the length of the path produced by the algorithm and length of $\gamma^*$ (the shortest path
    on $\calG$) for that problem. The naive strategy of searching the complete graph requires the following times to find a solution
    - (\subref{fig:results_2d_easy}) $\mathbf{44}$s, (\subref{fig:results_2d_hard}) $\mathbf{200}$s, (\subref{fig:results_4d_easy}) $\mathbf{12}$s and (\subref{fig:results_4d_hard}) $\mathbf{56}$s. In each case this is significantly more than the time for any other strategy to reach the optimum.
    Figure best viewed in color.}

    \label{fig:results_2d_4d}
\end{figure*}

\subsubsection{Sample Pruning and Rejection}

For anytime algorithms, once an initial solution is obtained, subsequent searches should be focused
on the subset of states that could potentially improve the solution. When the space $\mathcal{X}$ is
Euclidean, this, so-called ``informed subset'', can be described 
by a prolate hyperspheroid \cite{GSB14}. 
For our densification strategies, we prune away all existing vertices
(for all batching), and reject the newer vertices (for vertex and hybrid batching),
that fall outside the informed subset. 

Successive prunings due to intermediate solutions significantly reduces the average-case complexity
of future searches \cite{GSB15}, despite the extra time required to do so, which is accounted for in our benchmarking. Note that for Vertex and Hybrid Batching, which begin with only a few samples, samples in successive batches that are outside the current ellipse can just be rejected. This is cheaper than pruning, which is required  for Edge Batching. Across all test cases, we noticed poorer performance when pruning was omitted.

In the presence of obstacles, the extent to which the complexity is reduced due to pruning is difficult to obtain analytically. As shown in Theorem \ref{th:ellipse}, however, in the assumption of free space, we can derive results for Edge Batching. This motivates using this heuristic.

\begin{theorem}
\label{th:ellipse}
% \begin{lemma}
% \label{lem:ellipse}
Running edge batching in an obstacle-free $d$-dimensional Euclidean space over
a roadmap constructed using a deterministic low-dispersion sequence with
$r_0 > 2D_{n}$ and $r_{i+1} = 2^{1/d} r_{i}$,
while 
using sample pruning and rejection 
makes the worst-case complexity
of the total search, measured in edge evaluations, $O(n^{1+ 1/d})$.
\end{theorem}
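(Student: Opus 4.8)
The plan is to bound the total number of edges that can be evaluated, summed over all batches, when every batch is restricted to the informed subset. In the obstacle-free setting the shortest path $\gamma^*$ is the straight segment between $s_1$ and $s_2$, so the informed subset after the first feasible solution is found is a prolate hyperspheroid whose transverse diameter equals the length of $\gamma^*$ (a constant $c^* = \|s_1 - s_2\|$) and whose conjugate diameters shrink toward $0$ as the current solution cost approaches $c^*$. The key geometric fact I would use is that, by \eref{eq:dispersion_suboptimality}, once the radius at batch $i$ satisfies $r_i > 2D_n$ the cost of the returned path is within a factor $1 + 2D_n/(r_i - 2D_n)$ of $c^*$; hence after the first batch the conjugate diameter of the informed hyperspheroid is $O(\sqrt{c^* \cdot (c_i - c^*)}) = O(\sqrt{c^* \cdot c^* \cdot D_n / r_i}) = O(\sqrt{D_n/r_i})$ up to constants depending on $c^*$ and $d$. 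Therefore the Lebesgue measure of the informed subset at batch $i$ is $O\big((D_n/r_i)^{(d-1)/2}\big)$.

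Next I would count vertices and edges inside that region. Since $\calS$ has dispersion $D_n = O(n^{-1/d})$, any region of volume $\mu$ contains $O(\mu \cdot n)$ points of $\calS(n)$ (one point per ball of radius $D_n$, i.e.\ volume $\Theta(D_n^d) = \Theta(1/n)$). So the number of vertices surviving into batch $i$ is $n_i' = O\big(n \cdot (D_n/r_i)^{(d-1)/2}\big)$. In an $r_i$-disk graph on $n_i'$ such points the number of edges is $O\big((n_i')^2 \cdot r_i^d\big)$ — each vertex has degree $O(n \cdot r_i^d)$ restricted to the informed set, but more carefully the edge count is $O(n_i' \cdot n \cdot r_i^d)$ because degree is governed by the global density $n$ not $n_i'$. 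Substituting $r_i = r_0 \cdot 2^{i/d}$ with $r_0 = \Theta(n^{-1/d})$ and $D_n = \Theta(n^{-1/d})$ gives $D_n/r_i = \Theta(2^{-i/d})$, so $n_i' = O(n \cdot 2^{-i(d-1)/(2d)})$ and the edge count at batch $i$ is $O\big(n \cdot 2^{-i(d-1)/(2d)} \cdot n \cdot r_0^d 2^i\big) = O\big(n \cdot 2^{-i(d-1)/(2d)} \cdot 2^i\big)$ after using $n r_0^d = O(1)$. Wait — that still grows; the correction is that caching (each edge evaluated at most once) means I should sum the \emph{new} edges appearing at each batch, and these are dominated by the last batch before the informed set is fully exploited. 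I would instead argue that the number of distinct edges ever evaluated is at most the edge count of the single $r$-disk graph obtained at the batch where the radius first reaches the value making the informed conjugate diameter $O(r_i)$ itself, which balances $n \cdot 2^{-i(d-1)/(2d)} \cdot 2^i \le n^{1 + 1/d}$; solving for the crossover $i^*$ and plugging back yields the claimed $O(n^{1+1/d})$.

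So the cleanest route, and the one I would actually write, is: (1) show the first batch $\calG(n, r_0)$ has $O(n \cdot n r_0^d) = O(n^{1 + 1/d})$ edges since $r_0 = \Theta(n^{-1/d})$ and degree is $O(n r_0^d) = O(n^{1/d} \cdot $const$)$ — wait, $n r_0^d = n \cdot \Theta(1/n) = \Theta(1)$, giving only $O(n)$ edges, too small; so actually I want $r_0 = 3 n^{-1/d}$ as in the implementation, degree $\Theta(3^d) = \Theta(1)$, first batch $O(n)$ edges. (2) For each subsequent batch, the informed-set pruning caps the vertex count at $n_i' = O(n (D_n/r_i)^{(d-1)/2})$ by the dispersion-counting argument, so batch $i$ contributes at most $O(n_i' \cdot n r_i^d)$ new edge evaluations; (3) sum the geometric-type series $\sum_i n_i' \cdot n r_i^d = \sum_i O\big(n \cdot 2^{-i(d-1)/(2d)} \cdot 2^i\big)$, which is dominated by its last term at $i = \log_2 n / d$ roughly, evaluating to $O(n^{1+1/d})$; (4) add the $O(n)$ first-batch cost and the $O(n)$ total cost of graph-maintenance/pruning operations (sub-dominant). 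The main obstacle is step (2)–(3): getting the constants in the hyperspheroid-volume bound right as a function of $c^*$ and $d$, and being careful that "edges evaluated" across all batches is the union (thanks to caching) rather than the sum, so that the series genuinely telescopes/geometrically-dominates to its last term rather than accumulating a spurious $\log n$ factor. I would handle this by observing the sequence of informed subsets is nested (costs are non-increasing), so every edge is counted in the batch where it first appears, and the per-batch bounds already account for all such edges.
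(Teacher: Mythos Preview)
Your overall architecture matches the paper's: use \eref{eq:dispersion_suboptimality} to bound $c_{\text{best}}^i$, convert that into a bound on the volume of the informed prolate hyperspheroid, count surviving vertices $n_i$, count edges, and sum over the $O(\log_2 n)$ batches. Your volume estimate $\mu(\mathcal{X}_{c_{\text{best}}^i}) = O\big((D_n/r_i)^{(d-1)/2}\big)$ and the resulting $n_i = O\big(n\,2^{-i(d-1)/(2d)}\big)$ are exactly what the paper obtains.

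The genuine gap is at the edge-count step. You first write the formula the paper actually uses, $|E_i| = O(n_i^2\,r_i^d)$, and then talk yourself out of it in favour of $|E_i| = O(n_i \cdot n\,r_i^d)$ on the grounds that ``degree is governed by the global density $n$''. That switch is fatal: with your replacement the per-batch work is $O\big(n\,2^{i(d+1)/(2d)}\big)$, a geometric series with ratio $2^{(d+1)/(2d)}>1$, and its last term at $i_{\max} = \Theta(\log_2 n)$ is $\Theta\big(n^{(3d+1)/(2d)}\big)$, which exceeds $n^{1+1/d}$ for every $d\geq 2$. Your attempted repairs do not help: caching only prevents re-evaluating an edge, it cannot reduce the number of \emph{distinct} edges present in batch $i$; and the claim that the series is ``dominated by its last term at $i = \log_2 n/d$ \dots evaluating to $O(n^{1+1/d})$'' is not a computation, just an assertion, and it is false with your per-batch bound.

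The paper keeps $|E_i| = O(n_i^2\,r_i^d)$. Squaring $n_i$ squares the pruning factor to $2^{-i(d-1)/d}$; combined with $r_i^d = r_0^d\,2^i$ and $n\,r_0^d = O(1)$ this yields $|E_i| = O\big(n\,2^{i/d}\big)$, and then
\[
\sum_{i=0}^{\log_2 n} n\,2^{i/d} \;=\; O\big(n^{1+1/d}\big).
\]
So the fix is to revert to the formula you wrote first and discarded. If you want to justify it rather than simply invoke it as the standard $r$-disk edge count, the observation you are missing is that once the conjugate diameter of the ellipse drops below $r_i$, an $r_i$-ball around a surviving vertex intersects the ellipse in a set of volume $O(r_i \cdot \rho_i^{\,d-1})$ rather than $O(r_i^d)$, which is precisely the extra shrinkage that makes the $n_i^2 r_i^d$ scaling the right one; your ``global density'' degree bound ignores this thinness and overcounts neighbours that have already been pruned.
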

%\noindent

\begin{proof}
Let $\cbest{i}$ denote the cost of the solution obtained after $i$ iterations by our edge batching algorithm,
and $\cmin = \rho(s_1, s_2) \leq \sqrt{d}$ denote the cost of the optimal solution.
Using \eref{eq:dispersion_suboptimality},
\begin{equation}
\label{eq:cbest}
\cbest{i} \leq \left( 1 + \varepsilon_i \right) \cmin,
\end{equation}
where 
$\varepsilon_i = \frac{2D_n}{r_i - 2D_n}$.
Using the parameters for edge batching,
\begin{equation}
\label{eq:epsdec}
\eps{i+1} = \frac{2D_n}{r_{i+1} - 2D_n} = \frac{2D_n}{2^\frac{1}{d} r_i - 2D_n} \leq \frac{\eps{i}}{2^\frac{1}{d}}.
\end{equation}

Let $\imax$ be the maximum number of iterations and recall that we have $\imax = O \left( \log_2 n \right)$.

Note that the fact that vertices and edges are pruned away, does not change the bound provided in \eref{eq:cbest}.
To compute the actual number of edges considered at the $i$th iteration, we bound the volume of the prolate hyperspheriod $\mathcal{X}_{\cbest{i}}$ in $\mathbb{R}^{d}$ (see~\cite{GSB14}) by,
\begin{equation}
    \label{eq:ellipse}
    \vol{\mathcal{X}_{\cbest{i}}} 
    = \frac{ \cbest{i} \left( \left(\cbest{i}\right)^2 - \cmin^2 \right)^{\frac{d-1}{2}} \xi_d}{2^d},
\end{equation}
where $\xi_{d}$ is the volume of an $\mathbb{R}^{d}$ unit-ball. 
Using \eref{eq:cbest} in \eref{eq:ellipse},
\begin{equation}
    \vol{\mathcal{X}_{\cbest{i}}} 
    \leq \eps{i}^{\frac{d-1}{2}} \left(1 + \eps{i} \right)\left(2 + \eps{i}\right)^{\frac{d-1}{2}}  \const_d, 
\end{equation}
where $\const_d = \xi_d \cdot \left( \cmin / 2\right)^d$
 is a constant.
Using \eref{eq:epsdec} we can bound the volume of the ellipse used at the $i$'th iteration, where $i \geq 1$,
\begin{equation}
\begin{aligned}
    \vol{\mathcal{X}_{\cbest{i}}}   & 
    \leq 
    \eps{i}^{\frac{d-1}{2}} \left(1 + \eps{0} \right)\left(2 + \eps{0}\right)^{\frac{d-1}{2}} \const_d \\
                                    & 
    \leq 
    \eta^{-\frac{i(d-1)}{2}} \eps{0}^{\frac{d-1}{2}} \left(1 + \eps{0} \right)\left(2 + \eps{0}\right)^{\frac{d-1}{2}} \const_d \\
                                    & 
    \leq 
    2^{-\frac{i(d-1)}{2d}} \vol{\mathcal{X}_{\cbest{0}}}
\end{aligned}
\end{equation}
Furthermore, we choose  $r_0$ such that $\vol{\mathcal{X}_{\cbest{0}}} \leq \vol{\mathcal{X}}$.
Now, the number of vertices in $\spacebest$ can be bounded by,
\begin{equation}
n_{i+1} = \frac{ \vol{\spacebest} }{ \vol{\mathcal{X}} } n \leq 2^{-\frac{i(d-1)}{2d}} n.
\end{equation}
Recall that we measure the amount of work done by the search at iteration $i$ using $|E_i|$, the number of edges considered. Thus, 
\begin{equation}
|E_i| = O\left(n_i^2 r_i^d \right) = O\left(n^2 2^{-\frac{i(d-1)}{d}} \left(r_0 2^{\frac{i}{d}} \right)^d \right) = O\left(n 2^{\frac{i}{d}} \right)
\end{equation}
Finally, the total work done by the search over all iterations is
\begin{equation}
O \left( \sum \limits_{i=0}^{\log_2 n} n 2^\frac{i}{d} \right)
=
O \left( n \sum \limits_{i=0}^{\log_2 n} 2^{i/d} \right)
= 
O \left( n^{1+\frac{1}{d}} \right).
\end{equation}

\end{proof}

\begin{figure*}[tbh]
    \begin{subfigure}[b]{0.5\columnwidth}
        \centering
        \includegraphics[width=\textwidth]{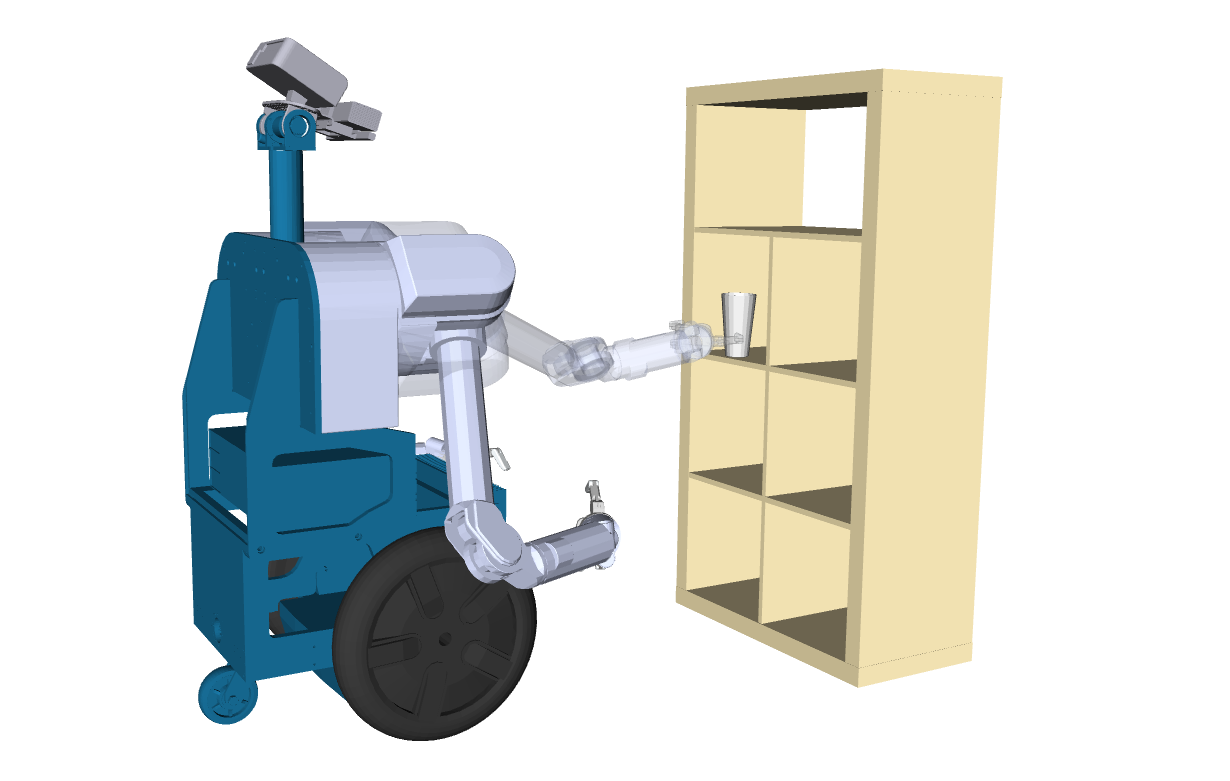}
        \caption{}
        \label{fig:herbprob1}
    \end{subfigure}
    \begin{subfigure}[b]{0.5\columnwidth}
        \centering
        \includegraphics[width=\textwidth]{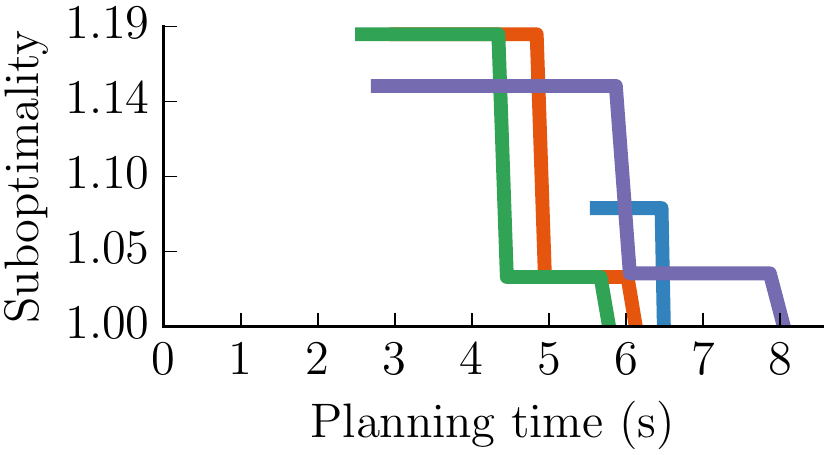}
        \caption{}
        \label{fig:herbplot1}
    \end{subfigure}
    \begin{subfigure}[b]{0.5\columnwidth}
        \centering
        \includegraphics[width=\textwidth]{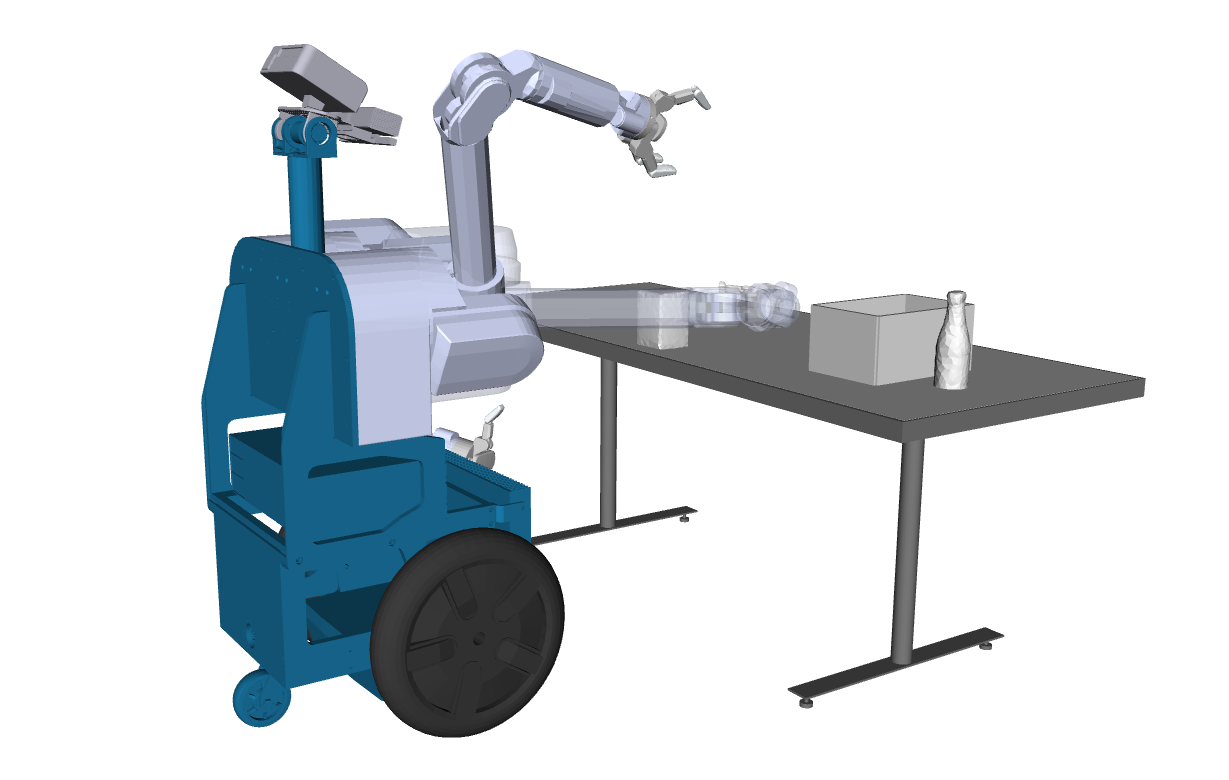}
        \caption{}
        \label{fig:herbprob2}
    \end{subfigure}
    \begin{subfigure}[b]{0.5\columnwidth}
        \centering
        \includegraphics[width=\textwidth]{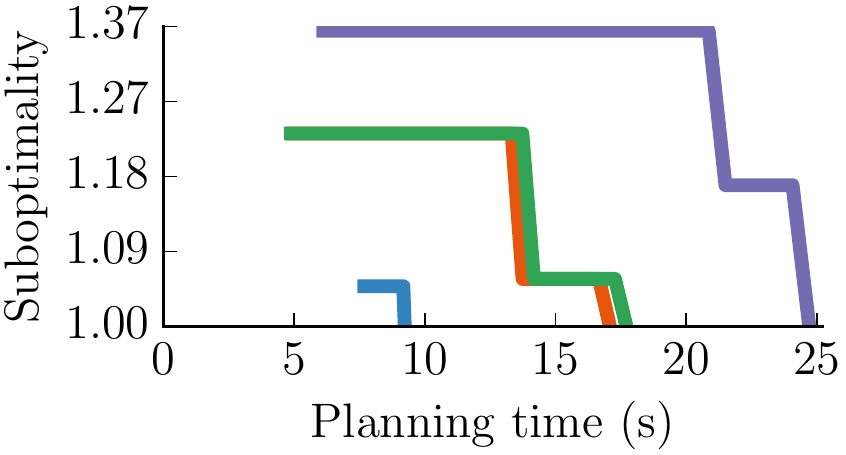}
        \caption{}
        \label{fig:herbplot2}
    \end{subfigure}

    \caption{ We show results on 2 manipulation problems for \protect\tikz{\protect\node[fill=myblue,draw=black]{};}\; vertex batching
    \protect\tikz{\protect\node[fill=myred,draw=black]{};}\; edge batching, \protect\tikz{\protect\node[fill=mygreen,draw=black]{};}\; hybrid batching
    and \protect\tikz{\protect\node[fill=mypurple,draw=black]{};}\; $\text{BIT}^{*}$. For each problem the goal configuration of the right
    arm is rendered translucent. Both of the problems are fairly constrained and non-trivial. The problem depicted in (\subref{fig:herbprob2}) has a 
    large clear area in front of the starting configuration, which may allow for a long edge. This could explain the better performance
    of vertex batching. The naive strategy takes $25$s for (\subref{fig:herbplot1}) and $44$s for (\subref{fig:herbplot2}) respectively.}

    \label{fig:results_herb}
\end{figure*}

\section{EXPERIMENTS}
\label{sec:experiments}
Our implementations of the various strategies are based on the publicly available
OMPL \cite{SMK12} implementation of $\text{BIT}^{*}$~\cite{GSB15}. Other than the specific parameters
and optimizations mentioned earlier, we use the default parameters of $\text{BIT}^{*}$. 
Notably, we use the Euclidean distance heuristic, an approximately sorted
queue, and limit graph pruning to changes in path length greater than 1\%.

%For vertex batching, we begin with $n_0 = 100$ vertices, irrespective of the total number of samples
%or the problem dimensionality. For edge batching, we begin with $r_0 = 3n^{-1/d}$. For the hybrid
%batching, we similarly begin with $n_0 = 100$ vertices and we have $r_0 = 3n_0^{-1/d}$. Other search
%parameters are identical across the variants---Euclidean distance between states as the heuristic, 
%informed rejection of newer samples, and graph pruning only when changes to the solution cost exceed $1\%$.

\subsection{Random scenarios}

The different batching strategies are compared to each other on problems in $\mathbb{R}^{d}$ for $d =2,4$.
The domain is the unit hypercube $[0,1]^{d}$ while the obstacles are randomly generated axis-aligned
$d$-dimensional hyper-rectangles. 
All problems have a start configuration of $[0.25,0.25,\ldots]$
and a goal configuration of $[0.75,0.75,\ldots]$.
We used the first $n = 10^{4}$ and $n = 10^{5}$ points of the Halton sequence for the~$\mathbb{R}^{2}$ 
and~$\mathbb{R}^{4}$ problems, respectively.

Two parameters of the obstacles are varied to approximate the notion of problem hardness
described earlier -- the number of obstacles and the fraction
of $\mathcal{X}$ which is in $\mathcal{X}_{obs}$, which we denote by $\zeta_{\text{obs}}$. 
%An easy problem is one which
%has fewer obstacles and a smaller value of $\zeta_{\text{forb}}$.
%The converse is true for a hard problem. 
Specifically, in $\mathbb{R}^{2}$, we have 
easy problems with $100$ obstacles and $\zeta_{\text{obs}} = 0.33$, and 
hard problems with $1000$ obstacles and $\zeta_{\text{obs}} = 0.75$. 
In $\mathbb{R}^{4}$
we maintain the same values for $\zeta_{\text{obs}}$, but use $500$ and $3000$ obstacles for easy and hard problems, respectively. For each problem setting ($\mathbb{R}^{2}$/$\mathbb{R}^{4}$; easy/hard) we generate~$30$ different random
scenarios and evaluate each strategy with the same set of samples on each of them. 
Each random scenario has a different set of solutions, so we show a representative
result for each problem setting in \figref{fig:results_2d_4d}. 

The results align well with our intuition
about the relative performance of the densification strategies on easy and hard problems.
Notice that the naive strategy of searching $\calG$ with $\text{A}^{*}$ directly requires considerably more time to report the 
optimum solution than any other strategy. We mention the numbers 
in the accompanying caption of \figref{fig:results_2d_4d} but avoid plotting them so
as not to stretch the figures. Note the reasonable performance of hybrid batching across problems and difficulty levels. 

\subsection{Manipulation problems}

We also run simulated experiments on HERB \cite{srinivasa2010herb}, a mobile manipulator designed and built by the Personal Robotics Lab at Carnegie Mellon University. 
The planning problems are for the 7-DOF right arm, on the problem
scenarios shown in Fig.~\ref{fig:results_herb}. 
We use a roadmap of $10^5$ vertices defined using a Halton sequence~$\calS$ which was generated using the first $7$ prime numbers.
In addition to the batching strategies, we also evaluate the performance
of $\text{BIT}^{*}$~\cite{GSB15}, using the same set of samples~$\calS$. $\text{BIT}^{*}$ had been shown to achieve anytime performance
superior to contemporary anytime algorithms. 
The hardness of the problems in terms of clearance
is difficult to visualize in terms of the C-space of the arm, but the goal regions
are considerably constrained. As our results show (Fig.~\ref{fig:results_herb}), all densification
strategies solve the difficult planning problem in reasonable time, and generally outperform the BIT* strategy
on the same set of samples.

\section{CONCLUSION AND FUTURE WORK}
\label{sec:conclusion}

We present, analyze and implement several densification strategies for anytime planning on large E$^4$ graphs. 
We provide theoretical motivation for these densification techniques, and show that they outperform the naive approach 
significantly on difficult planning problems.

In this work we demonstrate our analysis for the case where the set of samples is generated from a low-dispersion
deterministic sequence. 
A natural extension is to provide a similar analysis for a sequence of random i.i.d. samples.
Here, $f(|V|) = O\left(\log |V| \right)$~\cite{KF11}
instead of $O(|V|)$.
When out of the starvation regions we would like to bound the quality obtained similar to the bounds provided by~\eref{eq:dispersion_suboptimality}.
A starting point would be to leverage recent results by Dobson et. al.~\cite{DMB15} for Random Geometric Graphs
under expectation, albeit for a \emph{specific} radius $r$.

Another question we wish to pursue is alternative possibilities to traverse the subgraph space of $\calG$.
As depicted in Fig.~\ref{fig:ve_batching}, our densification strategies are essentially
ways to traverse this space . 
We discuss three techniques that traverse relevant boundaries of the space. But there are innumerable
trajectories that a strategy can follow to reach the optimum. It would be interesting
to compare our current batching methods, both theoretically and practically, to those that
go through the interior of the space.

%%%%%%%%%%%%%%%%%%%%%%%%%%%%%%%%%%%%%%%%%%%%%%%%%%%%%%%%%%%%%%%%%%%%%%%%%%%%%%%%

%%%%%%%%%%%%%%%%%%%%%%%%%%%%%%%%%%%%%%%%%%%%%%%%%%%%%%%%%%%%%%%%%%%%%%%%%%%%%%%%
%\section*{ACKNOWLEDGMENT}

%%%%%%%%%%%%%%%%%%%%%%%%%%%%%%%%%%%%%%%%%%%%%%%%%%%%%%%%%%%%%%%%%%%%%%%%%%%%%%%%

%\bibliographystyle{IEEEtran}
%\bibliography{IEEEabrv,tex/bibliography}

% \footnotesize{
% \bibliographystyle{abbrv}
% \bibliography{main}
% }

%\footnotesize{
%\bibliographystyle{abbrv}

\end{document}